\newtheorem{theorem}{Theorem}
\newtheorem{lemma}[theorem]{Lemma}
\theoremstyle{definition}
\newtheorem{definition}{Definition}
\def\url@leostyle{%
  \@ifundefined{selectfont}{\def\UrlFont{\sf}}{\def\UrlFont{\small\bf\ttfamily}}}
\newif\if@borderstar
\def\bordermatrix{\@ifnextchar*{%
\@borderstartrue\@bordermatrix@i}{\@borderstarfalse\@bordermatrix@i*}%
}
\def\@bordermatrix@i*{\@ifnextchar[{\@bordermatrix@ii}{\@bordermatrix@ii[()]}}
\def\@bordermatrix@ii[#1]#2{%
\begingroup
\m@th\@tempdima8.75\p@\setbox\z@\vbox{%
\def\cr{\crcr\noalign{\kern 2\p@\global\let\cr\endline }}%
\ialign {$##$\hfil\kern 2\p@\kern\@tempdima & \thinspace %
\hfil $##$\hfil && \quad\hfil $##$\hfil\crcr\omit\strut %
\hfil\crcr\noalign{\kern -\baselineskip}#2\crcr\omit %
\strut\cr}}%
\setbox\tw@\vbox{\unvcopy\z@\global\setbox\@ne\lastbox}%
\setbox\tw@\hbox{\unhbox\@ne\unskip\global\setbox\@ne\lastbox}%
\setbox\tw@\hbox{%
$\kern\wd\@ne\kern -\@tempdima\left\@firstoftwo#1%
\if@borderstar\kern2pt\else\kern -\wd\@ne\fi%
\global\setbox\@ne\vbox{\box\@ne\if@borderstar\else\kern 2\p@\fi}%
\vcenter{\if@borderstar\else\kern -\ht\@ne\fi%
\unvbox\z@\kern-\if@borderstar2\fi\baselineskip}%
\if@borderstar\kern-2\@tempdima\kern2\p@\else\,\fi\right\@secondoftwo#1 $%
}\null \;\vbox{\kern\ht\@ne\box\tw@}%
\endgroup
}
\begin{document}

\title{An Interval-Based Bayesian Generative Model for Human Complex Activity Recognition}

\author{\IEEEauthorblockN{Li Liu\IEEEauthorrefmark{1,2,3},~\IEEEmembership{Member,~IEEE}
Yongzhong Yang\IEEEauthorrefmark{4},
Lakshmi N. Govindarajan\IEEEauthorrefmark{4},
Shu Wang\IEEEauthorrefmark{5},\\
Bin Hu\IEEEauthorrefmark{6},~\IEEEmembership{Senior Member,~IEEE}
Li Cheng\IEEEauthorrefmark{3,4},~\IEEEmembership{Senior Member,~IEEE}, and
David S. Rosenblum\IEEEauthorrefmark{3},~\IEEEmembership{Fellow,~IEEE}
}
\IEEEauthorblockA{\IEEEauthorrefmark{1}Ministry of Education, Key Laboratory of Dependable Service Computing in Cyber Physical Society, Chongqing 400044, China}
\IEEEauthorblockA{\IEEEauthorrefmark{2}School of Software Engineering, Chongqing University, Chongqing 400044, China}
\IEEEauthorblockA{\IEEEauthorrefmark{3}School of Computing, National University of Singapore, 117417, Singapore}
\IEEEauthorblockA{\IEEEauthorrefmark{4}Bioinformatics Institute, A*STAR, 138671, Singapore}
\IEEEauthorblockA{\IEEEauthorrefmark{5}College of Material Science and Engineering, Lanzhou University of Technology, Lanzhou 730050, China}
\IEEEauthorblockA{\IEEEauthorrefmark{6}School of Information Science and Engineering, Lanzhou University, 730000, China}}


\IEEEtitleabstractindextext{%
\begin{abstract}
Complex activity recognition is challenging due to the inherent uncertainty and diversity of performing a complex activity.
Normally, each instance of a complex activity has its own configuration of atomic actions and their temporal dependencies.
We propose in this paper an atomic action-based Bayesian model that constructs Allen's interval relation networks to characterize complex activities with structural varieties in a probabilistic generative way:
By introducing latent variables from the Chinese restaurant process, our approach is able to capture all possible styles of a particular complex activity as a unique set of distributions over atomic actions and relations.
We also show that local temporal dependencies can be retained and are globally consistent in the resulting interval network.
Moreover, network structure can be learned from empirical data.
A new dataset of complex hand activities has been constructed and made publicly available,
which is 
much larger in size than any existing datasets.
Empirical evaluations on benchmark datasets as well as our in-house dataset demonstrate the competitiveness of our approach.
\end{abstract}

\begin{IEEEkeywords}
complex activity recognition, Allen's interval relation, Bayesian network, probabilistic generative model, Chinese restaurant process, temporal consistency, American Sign Language dataset.
\end{IEEEkeywords}}

\maketitle

\IEEEdisplaynontitleabstractindextext

%
\IEEEpeerreviewmaketitle

\section{Introduction}

A \emph{complex activity} consists of a set of temporally-composed events of \emph{atomic actions}, which are the lowest-level events that can be directly detected from sensors. In other words, a complex activity is usually composed of multiple atomic actions occurring consecutively and concurrently over a duration of time.
Modeling and recognizing complex activities remains an open research question as it faces several challenges:
First, understanding complex activities calls for not only the inference of atomic actions, but also the interpretation of their rich temporal dependencies.
Second, individuals often possess diverse styles of performing the same complex activity. As a result, a complex activity recognition model should be capable of capturing and propagating the underlying uncertainties over atomic actions and their temporal relationships.
Third, a complex activity recognition model should also tolerate errors introduced from atomic action level, due to sensor noise or low-level prediction errors.

\subsection{Related Work}
Currently, a lot of research focuses on semantic-based complex activity modeling.
Many semantic-based models such as context-free grammar (CFG)~\cite{ryoo2009semantic} and Markov logic network (MLN)~\cite{helaoui2011recognizing,liu2016mining}) are used to represent complex activities, which can handle rich temporal relations. Yet formulae and their weights in these models (e.g. CFG grammars and MLN structures) need to be manually encoded, which could be rather difficult to scale up and is almost impossible for many practical scenarios where temporal relations among activities are intricate.
Although a number of semantic-based approaches have been proposed for learning temporal relations, such as stochastic context-free grammars~\cite{Veeraraghavan2007Learning} and Inductive Logic Programming (ILP)~\cite{dubba2015learning}, they can only learn formulas that are either true or false, but cannot learn their weights, which hinders them from handling uncertainty.

On the other hand, graphical models become increasingly popular for modeling complex activities because of their capability of managing uncertainties~\cite{zhang2013modeling}.
Unfortunately, most of them can handle three temporal relations only, i.e. equals, follows and precedes.
Both Hidden Markov model (HMM) and conditional random field (CRF) are commonly used for recognizing sequential activities, but are limited in managing overlapping activities~\cite{kim2010human}.
Many variants with complex structures have been proposed to capture more temporal relations among activities, such as interleaved hidden Markov models (IHMM)~\cite{modayil2008improving}, skip-chain CRF~\cite{hu2008cigar} and so on.
However, these models are time point-based, and hence with the growth of the number of concurrent activities they are highly computationally intensive~\cite{pinhanez1999representation}.
Dynamic Bayesian network (DBN) can learn more temporal dependencies than HMM and CRF by adding activities' duration states, but imposes more computational burden~\cite{oliver2005comparison}.
Moreover, the structures of these graphical models are usually manually specified instead of learned from the data.
The interval temporal Bayesian network (ITBN)~\cite{zhang2013modeling} differs significantly from the previous methods, as being a graphical model that first integrates interval-based Bayesian network with the 13 Allen's relations. 
Nonetheless, ITBN has several significant drawbacks:
First, its directed acyclic Bayesian structure makes it have to ignore some temporal relations to ensure a temporally consistent network. As such, it may result in loss of internal relations.
Second, it would be rather computationally expensive to evaluate all possible consistent network structures, especially when the network size is large.
Third, neither can ITBN manage the multiple occurrences of the same atomic action, nor can it handle arbitrary network size as it remains unchanged as the count of atomic action types.
Figure~\ref{fig:comparison-structure-methods} illustrates the graph structures of the three commonly-used graphical models.
\begin{figure}[!htbp]
\center
\subfigure[IHMM]{
\includegraphics[width=0.27\columnwidth]{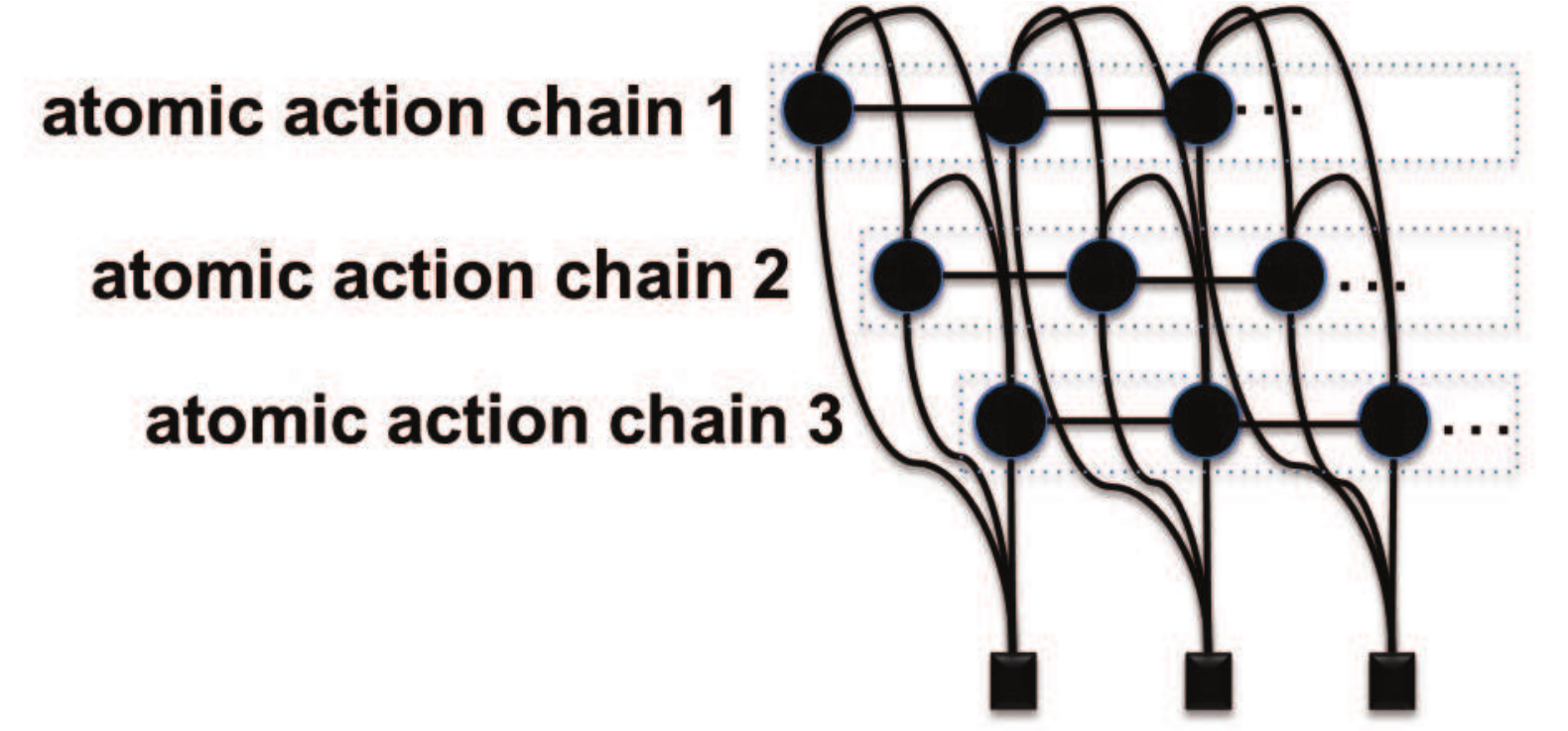}
\label{fig:experiment-comparison-ihmm}
}
\subfigure[DBN]{
\includegraphics[width=0.32\columnwidth]{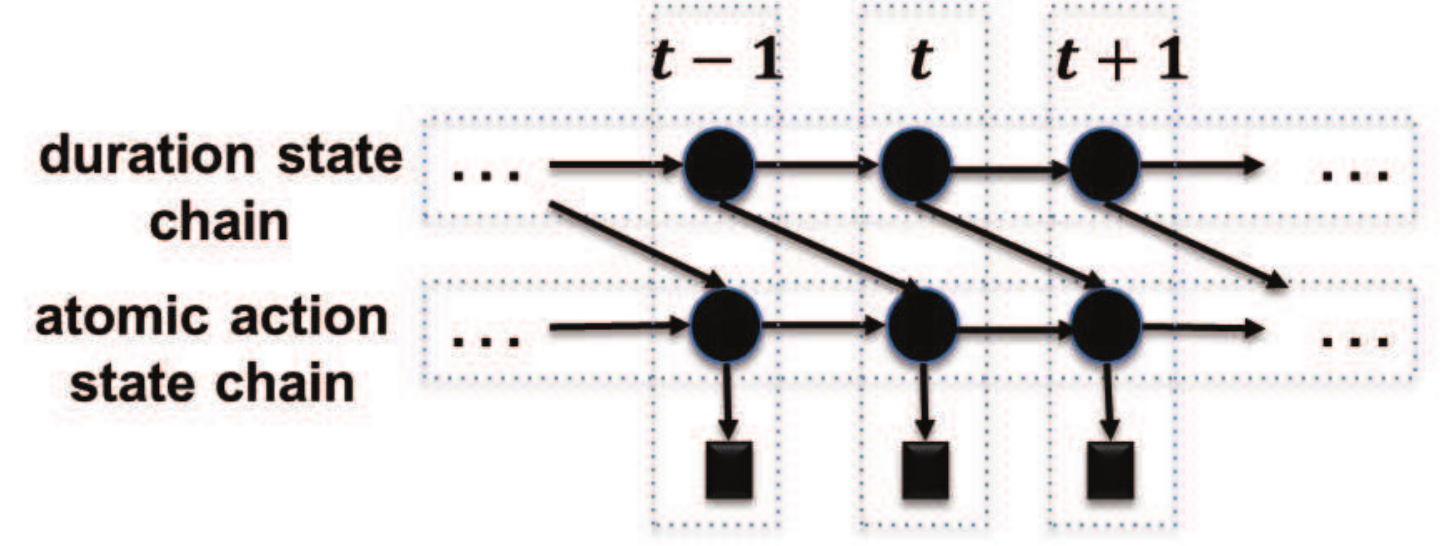}
\label{fig:experiment-comparison-dbn}
}
\subfigure[ITBN]{
\includegraphics[width=0.3\columnwidth]{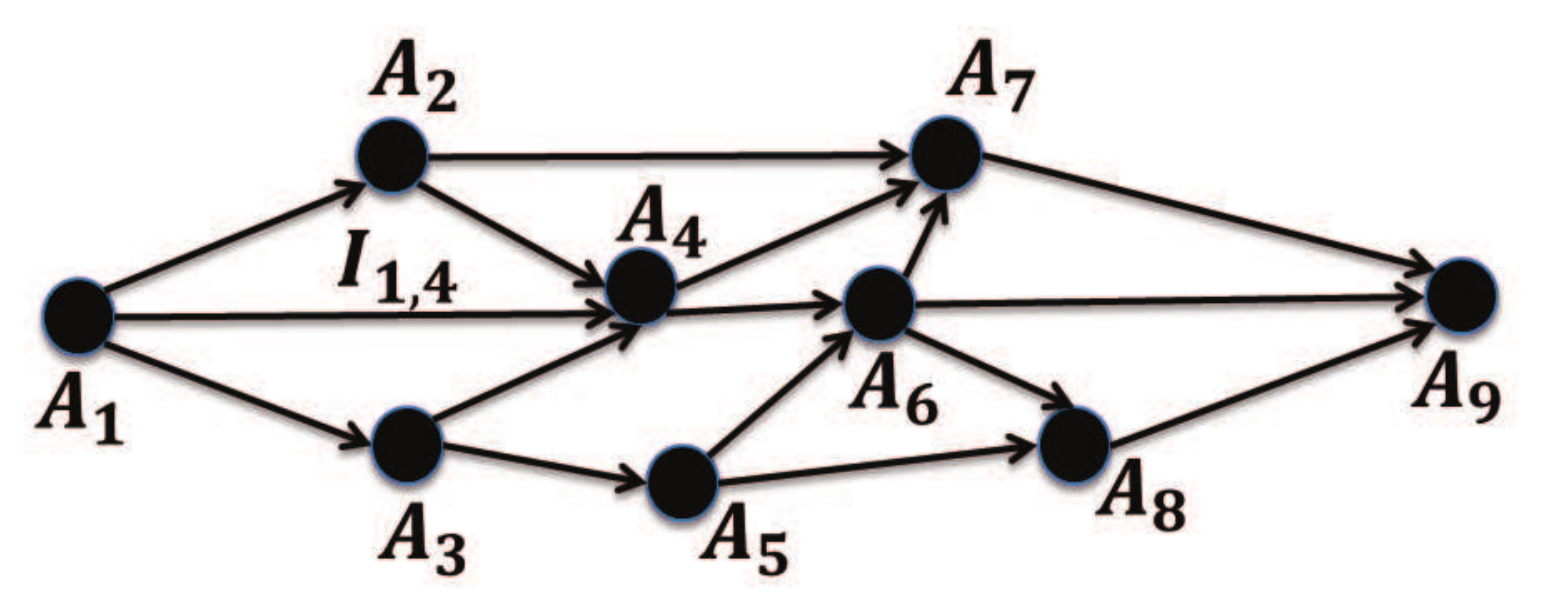}
\label{fig:experiment-comparison-itbn}
}
\caption{The structures of three graphical models for complex activity recognition. (a)IHMM, where the observations of atomic actions (square-shaped nodes) and several chains of hidden states (round-shaped nodes) are used to handle overlapping; (b) DBN, where duration states and atomic action states are represented as chains of nodes; (c) ITBN, where any atomic action type ($A_1$-$A_9$) is represented by a node and the set of all possible interval relations between any pair of atomic action types $A_i$ and $A_j$ is represented by a link $I_{i,j}$.}
\label{fig:comparison-structure-methods}
\end{figure}
It is worth noting that we will focus on complex activity recognition in this paper, and interested readers may consult the excellent reviews~\cite{aggarwal2011human,chen2012sensor,lara2013survey,cook2013activity,bulling2014tutorial,devanne20143} for further details regarding atomic-level action recognition.
\subsection{Our approach}
To address the problems in the previous models,
we present an \emph{interval-based Bayesian generative network} (IBGN) to explicitly model complex activities with inherent structural varieties, which is achieved by constructing probabilistic interval-based networks with temporal dependencies.
In other words, our model considers a probabilistic generative process of constructing interval-based networks to characterize the complex activities of interests.
Briefly speaking, a set of latent variables, called \emph{tables}, which are generated from the \emph{Chinese restaurant process} (\emph{CRP})~\cite{pitman2002combinatorial} are introduced to construct the interval-based network structures of a complex activity.
Each latent variable characterizes a unique \emph{style} of this complex activity by containing its distinct set of atomic actions and their temporal dependencies based on Allen's interval relations.
There are two advantages to using \emph{CRP}: It allows our model to describe a complex activity of arbitrary interval sizes and also to take into account multiple occurrences of the same atomic actions.
We further introduce \emph{interval relation constraints} that can guarantee the whole network generation process is globally temporally consistent without loss of internal relations.
In addition, instead of manually specify a network to a fixed structure, the network structure in our approach is learned from training data.
By \emph{learning network structures}, our model is more effective than existing graphical models in characterizing the inherent structural variability in complex activities. A further comparison study is summarized in Table~\ref{tab-comparison-graphical-model}, which also shows our main contributions.
\begin{table}[!htbp]
\caption{A summary comparison of graphical model-based approaches for recognizing complex activities.}
\scriptsize
\centering
\scalebox{0.9}{
\begin{tabular}{|p{0.63\columnwidth}|p{0.095\columnwidth}|p{0.07\columnwidth}|p{0.07\columnwidth}|}
\hline
& HMMs \& BNs   &  ITBN     & our IBGN\\
\hline
Time-point-based (p) or interval-based (i)? & p  & i & i\\
How many temporal relations can be described? & 3  & 13 & 13\\
Retain all the interval relations during training stage?& \checkmark  & \text{\sffamily X} & \checkmark\\
Handle any possible combinations of interval relations? & \text{\sffamily X}  & \checkmark & \checkmark\\
Handle multiple occurrences of the same atomic action? & \checkmark  & \text{\sffamily X} & \checkmark\\
Handle variable number of overlapping actions? & \text{\sffamily X} & \checkmark & \checkmark\\
Describe a complex activity with variable sizes of points or intervals?& \checkmark & \text{\sffamily X} & \checkmark\\
Is the structure learned from training data? & \text{\sffamily X}  & \checkmark & \checkmark\\
\hline
\end{tabular}
}
\label{tab-comparison-graphical-model}
\end{table}

It is worth mentioning that in spite of the increasing need from diverse applications in the area of complex activity recognition, there are only a few publicly-available complex activity recognition datasets~\cite{roggen2010collecting,brendel2011probabilistic,lillo2014discriminative}.
In particular, the number of instances are on the order of hundreds at most.
This motivates us to propose a dedicated large-scale dataset on depth camera-based complex hand activity recognition. We have constructed a new dataset of complex hand activities which contains instances that are about an order of magnitude larger than the existing datasets.
We have made the dataset and related tools made publicly available on a dedicated project website\footnote{The dataset including raw videos, annotations and related tools can be found at \url{http://web.bii.a-star.edu.sg/~lakshming/complexAct/}.} in support of the open-source research activities in this emerging research community.

\section{Definitions and Problem Formulation}
Assume we have at hand a dataset $\mathbb{D}$ of $N$ instances from a set of $L$ types of complex activities involving a set of $M$ types of atomic actions $\mathbb{A}=\{A_1, A_2, \ldots, A_M\}$.
An \emph{atomic action interval} (\emph{interval} for short) is written by $I_i=a_i@[t_i^-,t_i^+)$, where $t_i^-$ and $t_i^+$ represent the start and end time of the atomic action $a_i\in\mathbb{A}$, respectively.
Each complex activity instance is a sequence of $k$ ordered intervals, i.e. $\langle I_1,I_2,\ldots,I_k \rangle$, such that if $1\leq i<j\leq k$, then $(t_i^-<t_j^-)$ or $(t_i^-=t_j^-\text{ and }t_i^+\leq t_j^+)$.
Seven Allen's interval relations (relations for short)~\cite{allen1983maintaining} are used to represent all possible temporal relationships between two intervals, denoted by $\mathbb{R}=\{b,m,o,s,c,f,\equiv\}$, which is summarized below:
\begin{equation}
\nonumber
\footnotesize
\begin{split}
(\texttt{before})\quad I_i~ b~ I_j \quad\text{:}\quad& t_i^+<t_j^-,\\
(\texttt{meet})\quad I_i~ m~ I_j \quad\text{:}\quad& t_i^+=t_j^-, \\
(\texttt{overlap})\quad I_i~ o~ I_j \quad\text{:}\quad& t_i^-<t_j^-<t_i^+<t_j^+, \\
(\texttt{start})\quad I_i~ s~ I_j \quad\text{:}\quad& t_i^-=t_j^-\text{ and }t_i^+<t_j^+, \\
(\texttt{contain})\quad I_i~ c~ I_j \quad\text{:}\quad& t_i^-<t_j^-\text{ and }t_j^+<t_i^+, \\
(\texttt{finished-by})\quad I_i~ f~ I_j \quad\text{:}\quad& t_i^-<t_j^-\text{ and }t_i^+=t_j^+, \\
(\texttt{equal})\quad I_i~ \equiv~ I_j\quad\text{:}\quad& t_i^-=t_j^-\text{ and }t_i^+=t_j^+.
\end{split}
\end{equation}

We define an interval-based network (network for short) $G=(V,E)$ to represent a complex activity containing the temporal relationships between intervals. A node $v_i\in V$ represents the $i$-th interval (i.e. $I_i$) in a complex activity instance, while a directed link $e_{i,j}\in E$ represents the relation between two intervals (i.e. $I_i$ and $I_j$), where $1\leq i<j\leq \lvert V\rvert$ ($\lvert V\rvert$ is the cardinality of the set $V$). Note a link always starts from a node with a smaller index than its arrival node.
Each link $e_{i,j}$ involves one and only one relation $r_{i,j}\in\mathbb{R}$. Figure~\ref{fig:example} illustrates the corresponding networks of a complex activity.
\begin{figure}
\centering
\subfigure[Offensive play I]{
\includegraphics[width=0.2\textwidth]{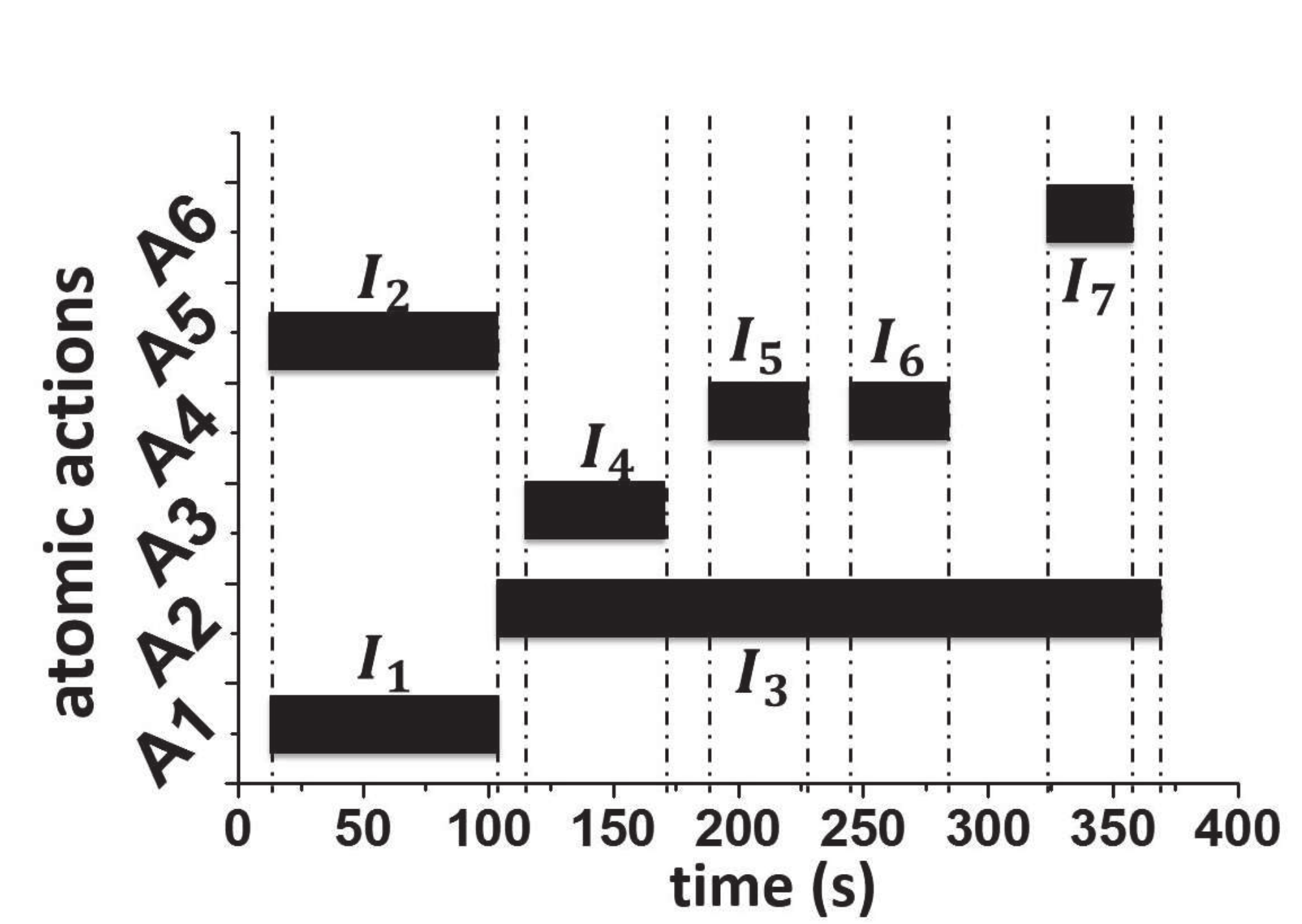}
\label{fig:example-timeline-case1}
 }
\subfigure[Offensive play II]{
\includegraphics[width=0.2\textwidth]{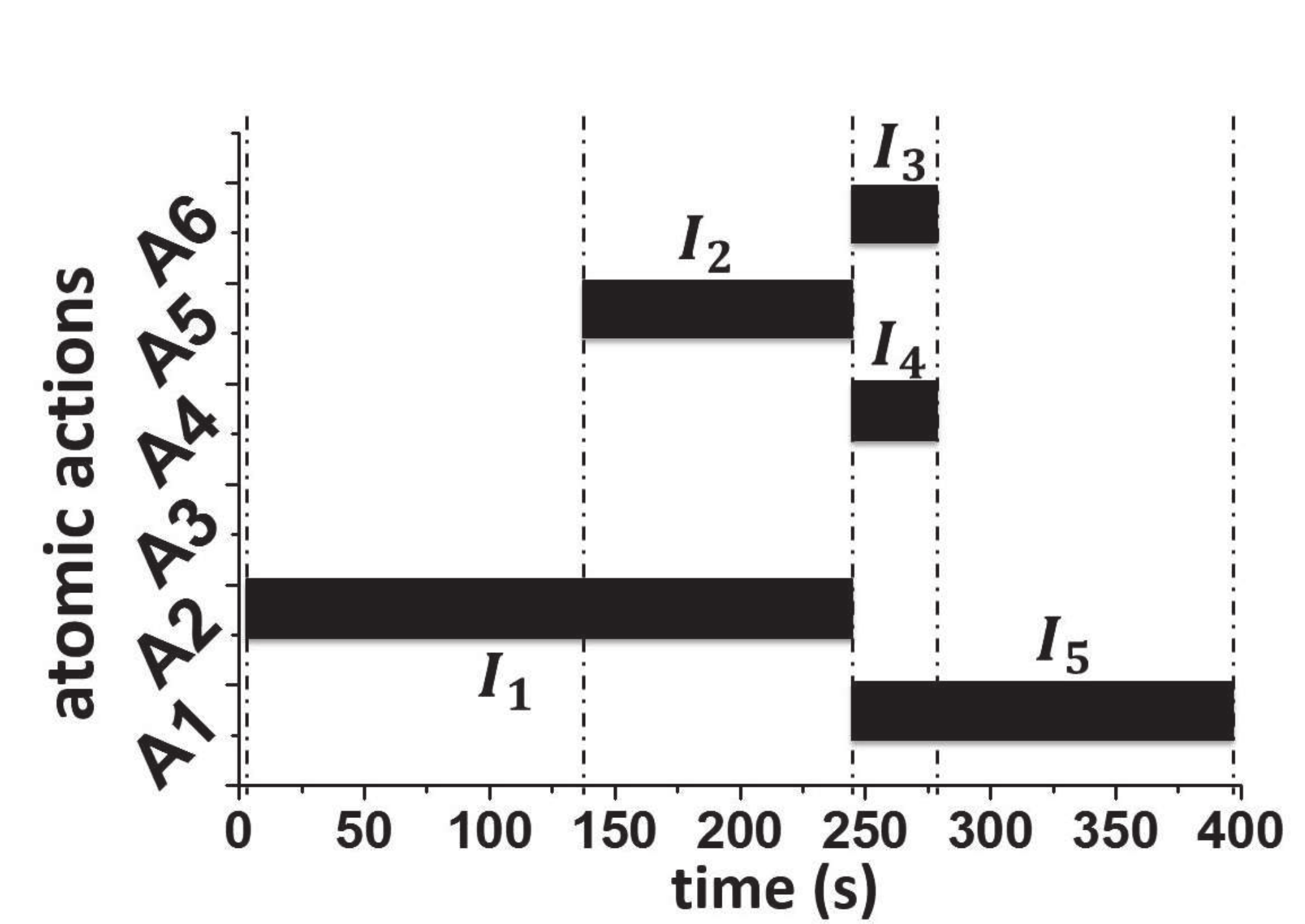}
\label{fig:example-timeline-case2}
 }
\subfigure[Network I]{
\includegraphics[width=0.2\textwidth]{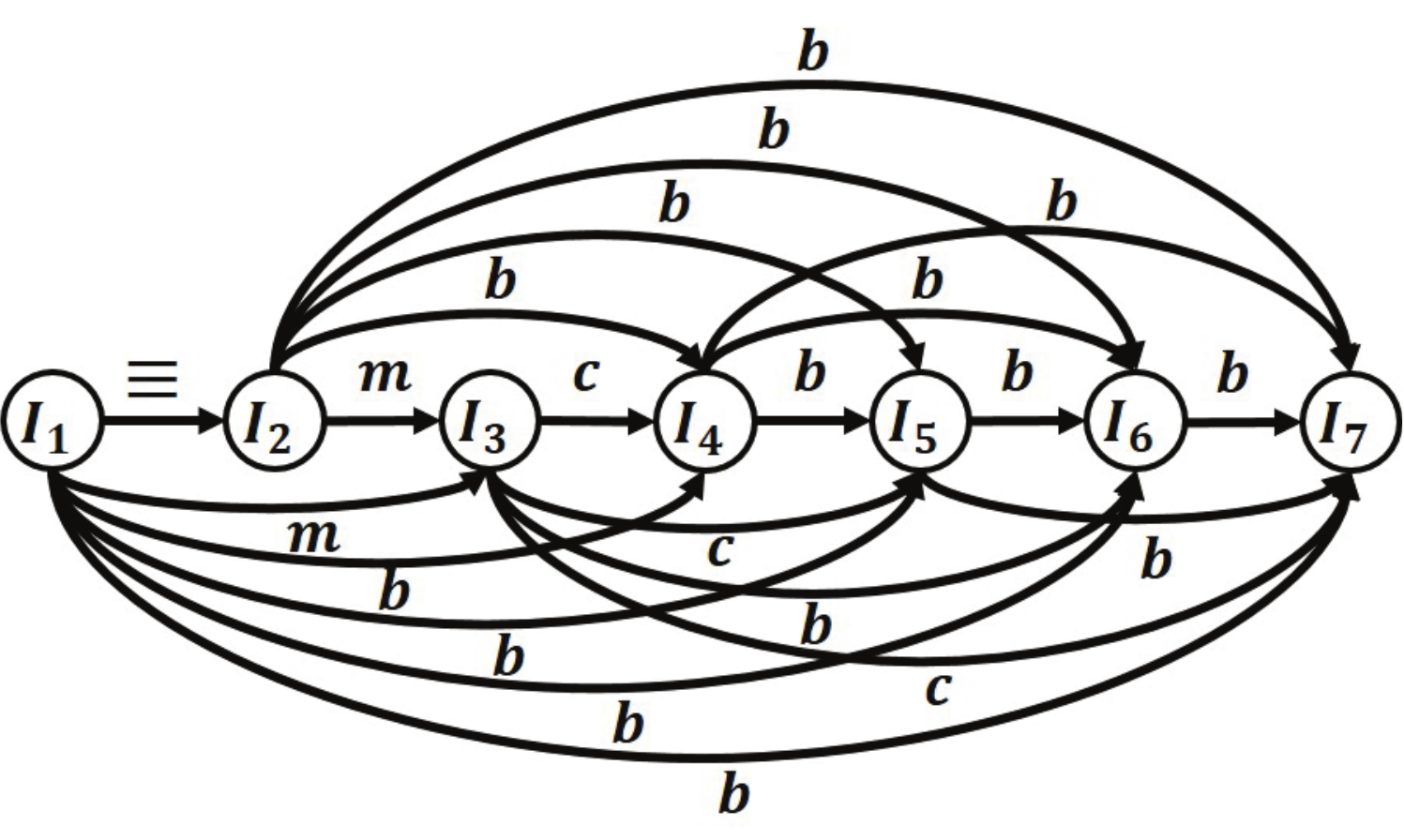}
\label{fig:example-network-case1}
 }
\subfigure[Network II]{
\includegraphics[width=0.2\textwidth]{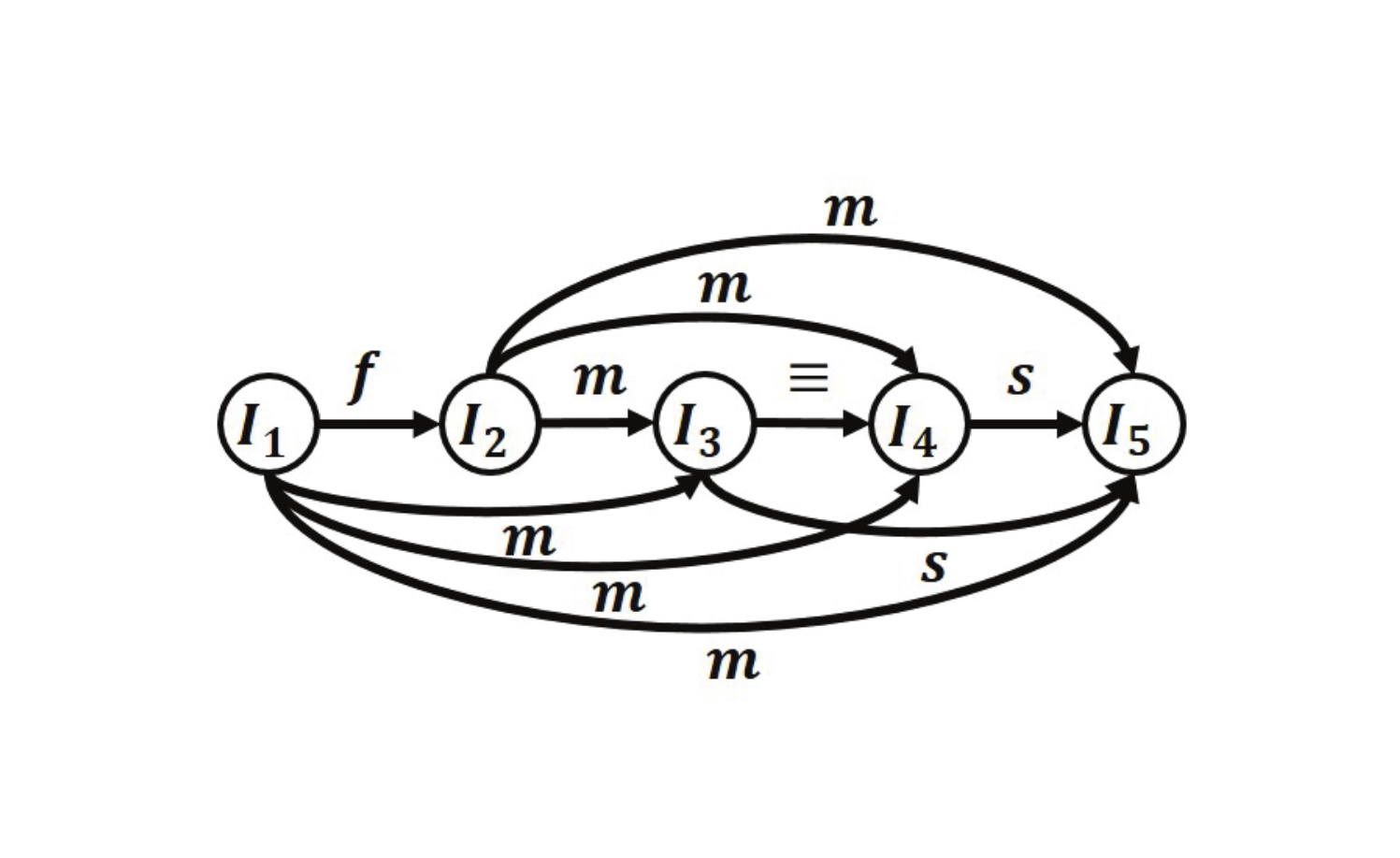}
\label{fig:example-network-case2}
 }
\caption{Two possible styles of the complex activity \emph{offensive play} and its corresponding networks where the nodes represent the intervals of atomic actions and the links represent their temporal relations. Atomic actions: $A_1=$\emph{walk}, $A_2=$\emph{stand}, $A_3=$\emph{hold ball}, $A_4=$\emph{jump}, $A_5=$\emph{dribble}, $A_6=$\emph{shoot}.}
\label{fig:example}
\end{figure}

The temporal relations on links shall be globally consistent in any interval-based network.
Given any two relations $r_{i,j},r_{j,k}\in\mathbb{R}$ on links $e_{i,j}$ and $e_{j,k}$, respectively, the relation $r_{i,k}$ on link $e_{i,k}$ must follow the transitivity properties as shown in Figure~\ref{fig:transitivity}.
For example, suppose $I_i$ \texttt{meets} $I_j$ and $I_j$ \texttt{starts} $I_k$, $I_i$ is certainly \texttt{meets} $I_k$. However, if $I_i$ \texttt{starts} $I_j$ and $I_j$ \texttt{is finished by} $I_k$, there are three possible relations between $I_i$ and $I_k$, that is, \texttt{before}, \texttt{meet} and \texttt{overlap}.
We formally use $r_{i,j}\circ r_{j,k}$ to denote such \emph{composition} operation following the transitivity properties.
We say a network is \emph{consistent} such that $r_{i,k}\in r_{i,j}\circ r_{j,k}$ for any triplet of links $e_{i,j}$, $e_{j,k}$ and $e_{i,k}$ in the network.
\begin{figure}[!htbp]
\center
\includegraphics[width=0.75\columnwidth]{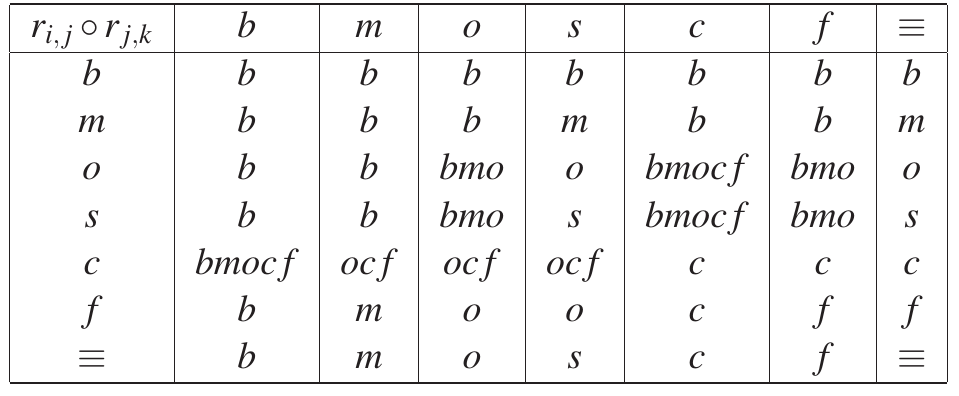}
\caption{The transitivity table for any interval relation $r_{i,k}$ through composition operation $r_{i,j}\circ r_{j,k}$.}
\label{fig:transitivity}
\end{figure}

It is clear that a network can characterize one possible style of a complex activity with diverse combinations of atomic actions and their interval relations, as illustrated in Figure~\ref{fig:example}.
From another point of view, a complex activity can be instantiated by sampling atomic actions and relations assigned to their associated nodes and links in a network following certain probabilities.
In this way, a recognition model built on such networks is able to handle uncertainty in complex activity recognition.
In addition, multiple occurrences of the same atomic action can appear in the same network but in different nodes (i.e. intervals).
To this end, we present the probabilistic generative model IBGN where these networks can be constructed following the styles of the complex activities of interests under uncertainty.
We shall also consider the IBGN model to construct networks with variable sizes of nodes and arbitrary structures.
Note in our approach, for each type of complex activities we would learn one such dedicated IBGN model.

\section{Our IBGN Model}
For any complex activity type $l$ ($1\leq l\leq L$), denote $\mathbb{D}_l\subseteq \mathbb{D}$ the corresponding subset of $N_l$ instances, where each element $d\in\mathbb{D}_l$ is an instance of the $l$-th type of complex activity
In IBGN, the generative process of constructing an interval-based network $G_d=(V_d,E_d)$ for describing the observed instance $d$ consists of two parts, node generation and link generation, which are described below.

\subsection{Node Generation}
In our IBGN model, we consider generating a network where each node is associated with an atomic action in a probabilistic manner. We also require our model to be capable of generating variable numbers of nodes and handling multiple occurrences of the same atomic action in our network, as summarized in Table~\ref{tab-comparison-graphical-model}.

\begin{figure*}[!htbp]
\centering
\includegraphics[width=0.65\textwidth]{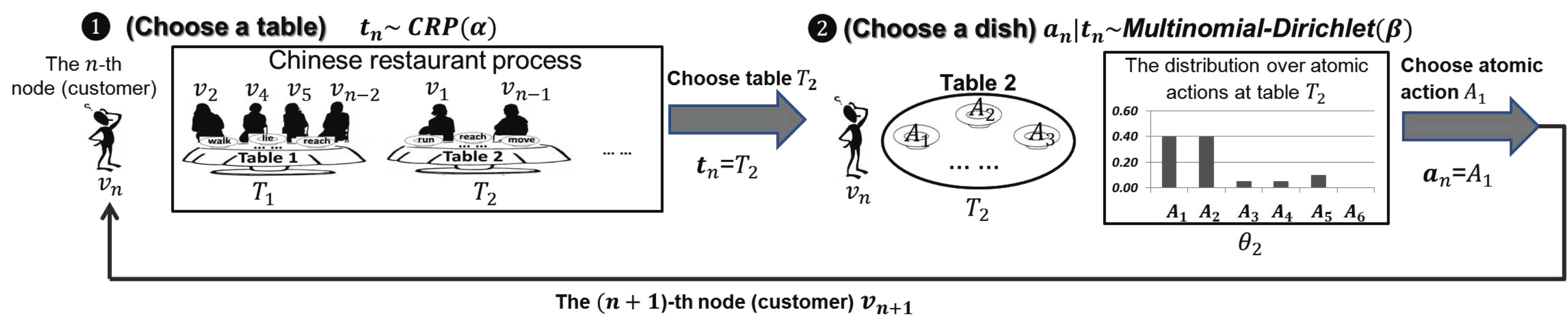}
\caption{An illustration of the generative process of choosing a table $\mathbf{t}_n$ and an atomic action $\mathbf{a}_n$ for node $v_n$.}
\label{fig:example-node-generation}
\end{figure*}

To accomplish these tasks, we first extend the process of the \emph{CRP}s to make it available in our IBGN model. Originally, a \emph{CRP} is analogous to a stochastic process of choosing tables for customers in a Chinese restaurant.
In a nutshell, suppose there are an infinite number of tables $\{T_1,T_2,\ldots\}$. The first customer ($n=1$) always selects the first table; Any other customer ($n>1$) randomly selects an occupied table or an unoccupied empty table with a certain probability.

We continue this process by serving dishes right after each customer is seated at a table. Assume there are a finite number of dishes and an infinite number of cuisines. Each table is associated with a cuisine that dishes are served with a unique probability distribution. Any customer sitting at a table randomly selects a dish with the probability relating to its corresponding cuisine.

In our model, a network contains a group of customers where each customer is analogous to a node while a dish is analogous to an atomic action type. Suppose customers from the same group prefer similar cuisines, which is analogous to a complex activity of interest, and are more likely to sit at the same tables.
Formally, denote $\{\mathbf{t}_1,\mathbf{t}_2,\ldots\}$ the variables of tables, and $\{\mathbf{a}_1,\mathbf{a}_2,\ldots\}$ the variables of atomic actions~\footnote{Whenever possible, we would use bold lowercase letters such as $\mathbf{t}_n$, $\mathbf{a}_n$ to represent variables, and uppercase letters such as $T_i$ and $A_j$ to represent generic values of these variables.}.
To construct a network $G_d=(V_d,E_d)$, $\mathbf{t}_n$ and $\mathbf{a}_n$ are the table and the atomic action (dish) assigned to the node (customer) $v_n\in V_d$ ($n=1,2,\ldots,\lvert V_d\rvert$).
The generative process operates as follows. The $n$-th node $v_{n}$ selects a table $\mathbf{t}_n$ that is drawn from the following distribution:
\begin{equation}
\small
P(\mathbf{t}_n=T_\zeta\mid\mathbf{t}_1,\ldots,\mathbf{t}_{n-1})=\left\{
\begin{array}{l l}
\frac{nt_\zeta}{n+\alpha-1} & \quad \text{if $T_\zeta$ is a non-empty table },\\
\frac{\alpha}{n+\alpha-1} & \quad \text{if $T_\zeta$ is a new table},
\end{array} \right.
\end{equation}
where $nt_{\zeta}$ is the number of existing nodes occupied at table $T_\zeta$ ($\zeta=1,2,\ldots$), with $\sum_{\zeta}{nt_\zeta}=n-1$, $\mathbf{t}=T_1$. $\alpha$ is a tuning hyperparameter.
It is worth mentioning that the distribution over table assignments in \emph{CRP} is invariant and exchangeable under permutation according to de Finetti's Theorem~\cite{teh2011dirichlet}.
After $v_{n}$ is assigned with a table $\textbf{t}_n=T_\zeta$, an atomic action (dish) $\mathbf{a}_{n}$ is chosen from the table by:
\begin{equation}
\nonumber
\small
\begin{split}
&(\mathbf{a}_n\mid\mathbf{t}_n) \sim\text{\emph{Multinomial}}(\theta_{\zeta}),\\
&\theta_{\zeta}\sim\text{\emph{Dirichlet}($\beta$)},
\end{split}
\end{equation}
where $\beta$ is a hyperparameter.
Note $\theta_\zeta=(\theta_{\zeta,1},\ldots,\theta_{\zeta,M})^T$ is the parameter vector of a multinomial distribution at table $T_\zeta$. Figure~\ref{fig:example-node-generation} presents a cartoon explanation of this node generation process of our IBGN model.
\begin{figure}[!htbp]
\centering
\subfigure[Distributions of atomic actions at different tables that collectively represent the complex activity \emph{offensive play}.]{
\includegraphics[width=0.9\columnwidth]{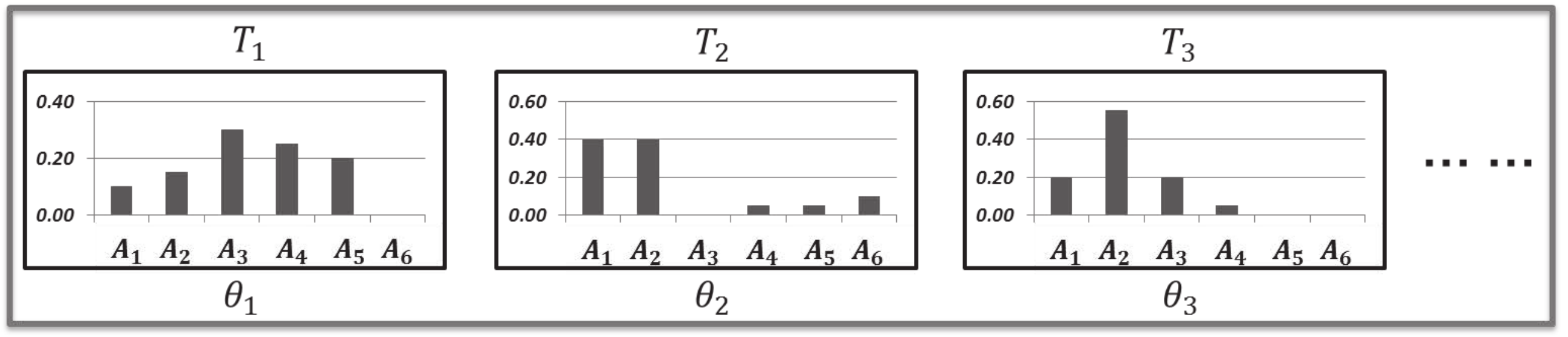}\label{fig:example-crp-table-case1}
}
\subfigure[Distributions of atomic actions at different tables that collectively represent the complex activity \emph{defensive play}.]{
\includegraphics[width=0.9\columnwidth]{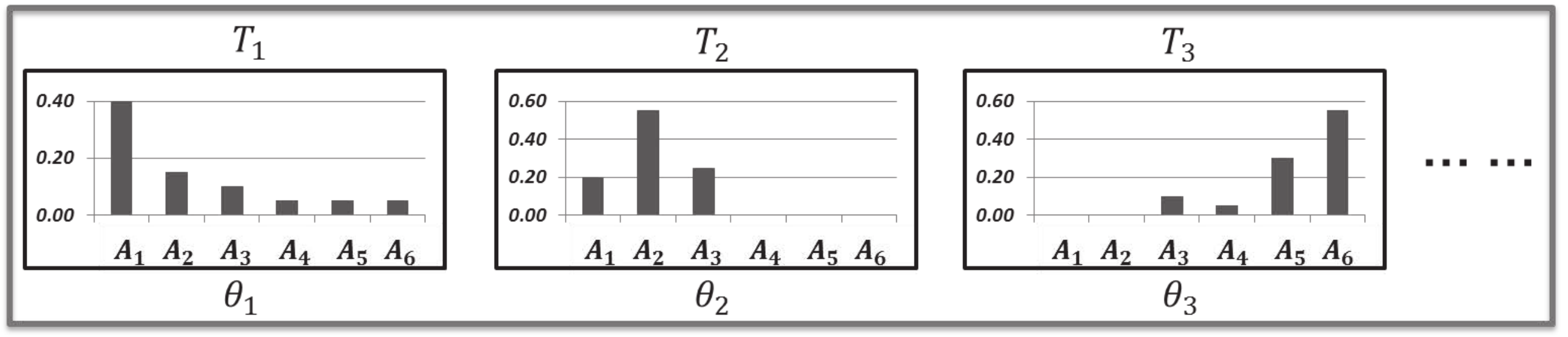}\label{fig:example-crp-table-case2}
}
\caption{Examples of the tables and corresponding distributions over atomic actions for representing two complex activities, \texttt{offensive play} and \texttt{defensive play}, respectively. Here each table contains a set of atomic actions under a specific distribution.}
\label{fig:example-tables}
\end{figure}
Since we would obtain one dedicated IBGN model for each complex activity, a complex activity is now characterized as a unique set of distributions over atomic actions, i.e. $\theta=\{\theta_1,\theta_2,\ldots\}$. As illustrated in Figure~\ref{fig:example-tables}, the two different complex activities \emph{offensive play} and \emph{defensive play} are associated with two distinct sets of tables with their own distributions over atomic actions. It suggests that our IBGN modeling approach is capable of differentiating the underlying characteristics associated with atomic actions from different complex activity categories.

\subsection{Link Generation}
Once each node (interval) is assigned to an atomic action, links and their associated relations are to be generated next. It is important to ensure \emph{consistency} of the resulting relations over all links.
Formally, given two relations $\mathbf{r}_{n',n''}$ and $\mathbf{r}_{n'',n}$ on the links $e_{n',n''}$ and $e_{n'',n}$ ($n'<n''<n$), respectively, the interval relation $\mathbf{r}_{n',n}$ on link $e_{n',n}$ shall follow the transitivity properties listed in Figure~\ref{fig:transitivity}.
It is straightforward to verify that the set $\mathbb{R}$ is closed under the composition operation.
As a result, by applying the transitivity table, for any composition there exists only $11$ possible transitive relations in Figure~\ref{fig:union}, denoted as $\mathbb{C}=\{C_z:1\leq z\leq 11\}$. In other words, any composition of two consecutive relations satisfies $\mathbf{r}_{n',n''}\circ \mathbf{r}_{n'',n}\in\mathbb{C}$.
\begin{figure}[!htbp]
\centering
\includegraphics[width=\columnwidth]{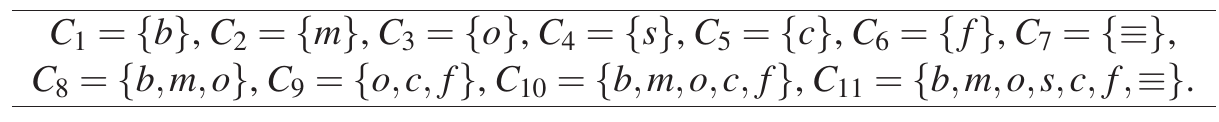}
\caption{The $11$ possible interval composition relations.}
\label{fig:union}
\end{figure}

To construct a \emph{globally consistent network}, the relations on every triangle in a network must also be consistent. Namely, for any triplet of nodes $v_{n'}$, $v_{n''}$ and $v_n$, if there exist three links $\mathbf{r}_{n',n}$, $\mathbf{r}_{n',n''}$, and $\mathbf{r}_{n'',n}$, they need to satisfy the transitive relation $\mathbf{r}_{n',n}\in \mathbf{r}_{n',n''}\circ \mathbf{r}_{n'',n}$.
As such, we define the \emph{interval relation constraint} variable as follows:
\begin{definition}
Give an arbitrary interval-based network $G=(V,E)$, the interval relation constraint $\mathbf{c}_{n',n}\in\mathbb{C}$ for a link $e_{n',n}\in E$ ($1\leq n'<n\leq \lvert V \rvert$) is
\begin{equation}
\label{eq:intervalconstraint}
\nonumber
\small
\begin{aligned}
&\mathbf{c}_{n',n} = \left\{
  \begin{array}{l l}
    \bigcap_{n''=n'+1}^{n-1}{(\mathbf{x}_{n',n''}\circ\mathbf{x}_{n'',n})} & \quad \text{if $n>n'+1$},\\
    \mathbb{R} & \quad \text{if $n=n'+1$},
  \end{array} \right.\\
&\text{where}\quad
\mathbf{x}_{p,q} = \left\{
  \begin{array}{l l}
    \mathbf{r}_{p,q} & \quad \text{if }e_{p,q}\in E,\\
    \mathbf{c}_{p,q} & \quad \text{if }e_{p,q}\notin E,
  \end{array} \right.
  (n'\leq p<q\leq n).
\end{aligned}
\end{equation}
\end{definition}
In link generation, our IBGN model follows the rule that any relation $\mathbf{r}_{n',n}$ can only be drawn from $\mathbf{c}_{n',n}$.
We have proved that a network constructed under this rule is globally consistent and complete, with proofs relegated to the Appendix~\ref{appendix:consistency}.
\begin{theorem}
\label{thm:consistency}
\text{(\textbf{Network Consistency and Completeness})}\\
A network $G_{d}$ constructed by obeying the interval relation constraints is always temporally consistent, and any possible combination of relations in $G_{d}$ can be constructed through our IBGN model.
\end{theorem}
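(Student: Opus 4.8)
The plan is to establish the two assertions separately, both by induction that exploits the recursive shape of the interval relation constraint $\mathbf{c}_{n',n}$. The first thing I would pin down is the order in which links are generated, since $\mathbf{c}_{n',n}$ is only well-posed once we know which of the terms $\mathbf{x}_{m,n}$ (for $n'<m<n$) are already determined. The natural choice is to add nodes one at a time and, for each new node $v_n$, to generate its incoming links from the nearest predecessor to the farthest, i.e.\ in the order $e_{n-1,n},e_{n-2,n},\ldots,e_{1,n}$. With this ordering, whenever $\mathbf{c}_{n',n}$ is evaluated, every term $\mathbf{x}_{n',m}$ lies among already-fixed links between earlier nodes, while every term $\mathbf{x}_{m,n}$ with $m>n'$ has already been drawn (as $\mathbf{r}_{m,n}$) or computed (as $\mathbf{c}_{m,n}$) in the current round. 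This dispels the apparent circularity in the definition and gives a clean linear order to induct along.

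For consistency I would induct on that generation order. The base case (links of span one, where $\mathbf{c}_{n',n}=\mathbb{R}$, and networks with fewer than three nodes) is vacuous, since no triangle yet exists. In the inductive step, adding $v_n$ creates only new triangles $(v_{n'},v_{n''},v_n)$ in which $v_n$ has the largest index. Whenever all three of their links are present, the rule $\mathbf{r}_{n',n}\in\mathbf{c}_{n',n}$ together with the definition yields $\mathbf{r}_{n',n}\in\mathbf{x}_{n',n''}\circ\mathbf{x}_{n'',n}$; since $e_{n',n''}$ and $e_{n'',n}$ are both present we have $\mathbf{x}_{n',n''}=\mathbf{r}_{n',n''}$ and $\mathbf{x}_{n'',n}=\mathbf{r}_{n'',n}$, so $\mathbf{r}_{n',n}\in\mathbf{r}_{n',n''}\circ\mathbf{r}_{n'',n}$, which is exactly the transitivity requirement of Figure~\ref{fig:transitivity}. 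Pre-existing triangles stay consistent by the inductive hypothesis, so the whole network is consistent. A small auxiliary point to settle is that each $\mathbf{c}_{n',n}$ is nonempty, so that a relation can actually be drawn; this I would obtain from the stated closure of $\mathbb{R}$ under composition together with a finite check that the intersections arising among the eleven classes of $\mathbb{C}$ never vanish.

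For completeness I would show that every globally consistent labelling of $G_d$ is reachable by replaying the generation and choosing, at each link, the target relation $\mathbf{r}_{n',n}^{\ast}$; the crux is that at the moment it is drawn $\mathbf{r}_{n',n}^{\ast}\in\mathbf{c}_{n',n}$. I would prove, by induction on the span $n-n'$, the stronger statement that for every pair $(p,q)$ the effective set $\mathbf{x}_{p,q}$ contains the relation $\rho_{p,q}$ that the target assigns to that pair — the explicit label when $e_{p,q}\in E$, and a fixed consistent completion value when $e_{p,q}\notin E$. Granting this for all smaller spans, for each intermediate $m$ the monotonicity of composition gives $\rho_{n',m}\circ\rho_{m,n}\subseteq\mathbf{x}_{n',m}\circ\mathbf{x}_{m,n}$, and consistency of the (completed) target forces $\mathbf{r}_{n',n}^{\ast}=\rho_{n',n}\in\rho_{n',m}\circ\rho_{m,n}$; intersecting over all $m$ then yields $\mathbf{r}_{n',n}^{\ast}\in\mathbf{c}_{n',n}$, as required.

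I expect the main obstacle to be precisely this induction in the missing-link case: when $e_{n',m}$ or $e_{m,n}$ is absent, $\mathbf{x}$ is the disjunctive set $\mathbf{c}$ rather than a single relation, and I must guarantee that one completion value can be chosen simultaneously for all absent links so that every triangle composes. This reduces to showing that a consistent base-relation labelling of the present links extends to a consistent labelling of the complete graph on $V_d$ — the realizability (path-consistency) property of Allen's base relations — and then verifying that the recursively defined constraints never exclude such an extension. The consistency half, by contrast, I expect to be routine once the close-to-far generation order is fixed.
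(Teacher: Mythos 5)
Your proposal is sound and reaches both halves of the theorem, but it is organized quite differently from the paper's own proof. The paper factors consistency through two auxiliary lemmas --- an associative law for composition of relation sets (Lemma~\ref{thm:associative}) and a path-consistency lemma (Lemma~\ref{thm:path-consistency}) showing that triangle transitivity of the $\mathbf{x}$-system propagates to arbitrary paths --- and then concludes, leaving implicit the key fact that triangles are transitive by construction because $\mathbf{r}_{n',n}$ is drawn from $\mathbf{c}_{n',n}\subseteq\mathbf{x}_{n',n''}\circ\mathbf{x}_{n'',n}$. You dispense with paths and associativity entirely and make exactly that implicit fact the heart of a direct induction along the generation order; your observation that links must be generated near-to-far (matching the loop $n-1\geq n'\geq 1$ in Algorithm~\ref{algo-gp}) also settles the well-foundedness of the recursive definition of $\mathbf{c}_{n',n}$, which the paper's proof never discusses. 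For completeness, the paper gives a short contradiction argument --- if $\mathbf{x}_{i,n}\notin\bigcap_{j}(\mathbf{x}_{i,j}\circ\mathbf{x}_{j,n})$ then some composition fails, ``contradicting consistency'' --- which silently treats every $\mathbf{x}_{i,j^*}$, including those on absent links, as if it already carried a consistent value; your replay argument with induction on the span $n-n'$ is the rigorous version of this, and you correctly isolate its crux in the absent-link case: the target labelling must first be extended to a consistent labelling of the complete graph on $V_d$, which is immediate when the combination comes from an actual instance (compute all pairwise relations from the realized intervals) and is the classical realizability property of Allen base-relation scenarios otherwise. What the paper's route buys is the stronger standalone statement about paths; what yours buys is rigor precisely where the paper is thinnest --- well-foundedness of $\mathbf{c}$, nonemptiness of the constraint sets, and the missing-link case in completeness.
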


Now, we are ready to assign relations to links. Suppose $\mathbf{a}_{n'}=A_{i}$, $\mathbf{a}_{n}=A_j$ and $\mathbf{c}_{n',n}=C_z$ ($1\leq i,j\leq M$, $1\leq z\leq 11$), a relation $\mathbf{r}_{n',n}$ on link $e_{n',n}\in E_d$ is chosen from a distribution over all possible relations in $\mathbf{c}_{n',n}$ as follows:
\begin{equation}
\nonumber
\small
\begin{split}
&\mathbf{r}_{n',n}\mid\mathbf{a}_{n'},\mathbf{a}_{n},\mathbf{c}_{n',n}\sim\text{\emph{Multinomial}}(\varphi_{i,j,z}),\\
\end{split}
\end{equation}
where
$\varphi_{i,j,z}$ is the parameter vector of the multinomial distribution associated with the triplet $(A_i,A_j,C_z)$. Note that for an interval relation constraint containing only one relation (i.e. $C_{1}$-$C_{7}$), the probability of choosing that relation is always one.

It is also important to notice that the quality of the network structure plays an extremely important role in activity modeling. In our previous work~\cite{liu2016recognizing}, two variants with fixed network structures are considered: \emph{chain-based network} as in Figure~\ref{fig:example-network-structure}(a), where only the links between two neighbouring nodes are constructed in networks; \emph{fully-connected network} as in Figure~\ref{fig:example-network-structure}(b), where all pairwise links are constructed in networks. In fact, they are two special cases of our IBGN model. In chain-based networks, only a set of $\lvert V_d \rvert-1$ links $e_{1,2}, e_{2,3}, \ldots, e_{\lvert V_d \rvert-1, \lvert V_d \rvert}$ are generated, with $e_{n,n+1}$ ($1\leq n\leq \lvert V_d \rvert-1$) representing the link of two neighboring nodes $v_{n}\rightarrow v_{n+1}$. Any interval relation constraint $\mathbf{c}_{n,n+1}$ in chain-based networks equals to $\mathbb{R}$, and thus such networks are inherently consistent because no inconsistent triangle exists. However crucial relations may be missing in this model. On the other side of the spectrum, we have fully-connected networks, where all possible pairwise links are considered. Any $\mathbf{x}_{p,q}$ in fully-connected networks equals to $\mathbf{r}_{p,q}$. When fitting the IBGN model, it is possible to increase the likelihood by adding links, but doing so may result in overfitting. Instead of prefixing the network structures, in this work we relax the assumption of a structure being either fully-connected or chain-based, and consider learning an optimal structure (i.e. to decide which links should exist in the network) from data.
This allows the IBGN model to handle temporal consistency for arbitrary network structures, as presented in Figure~\ref{fig:example-network-structure}(c).
\begin{figure*}[!htbp]
\centering
\subfigure[Chain-based network.]{
\includegraphics[width=0.2\textwidth]{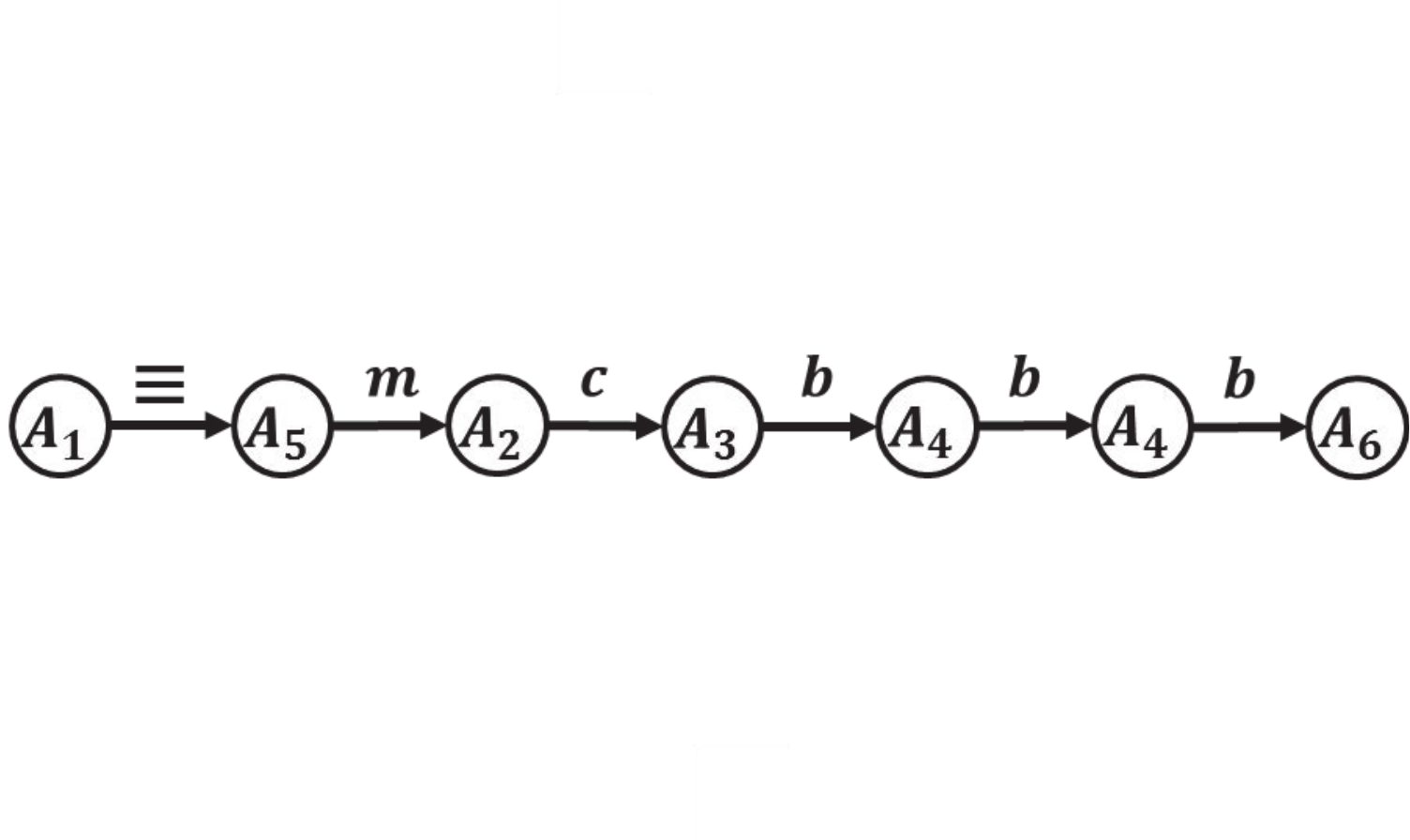}\label{fig:example-example-network-chain}
}
\subfigure[Fully-connected network.]{
\includegraphics[width=0.2\textwidth]{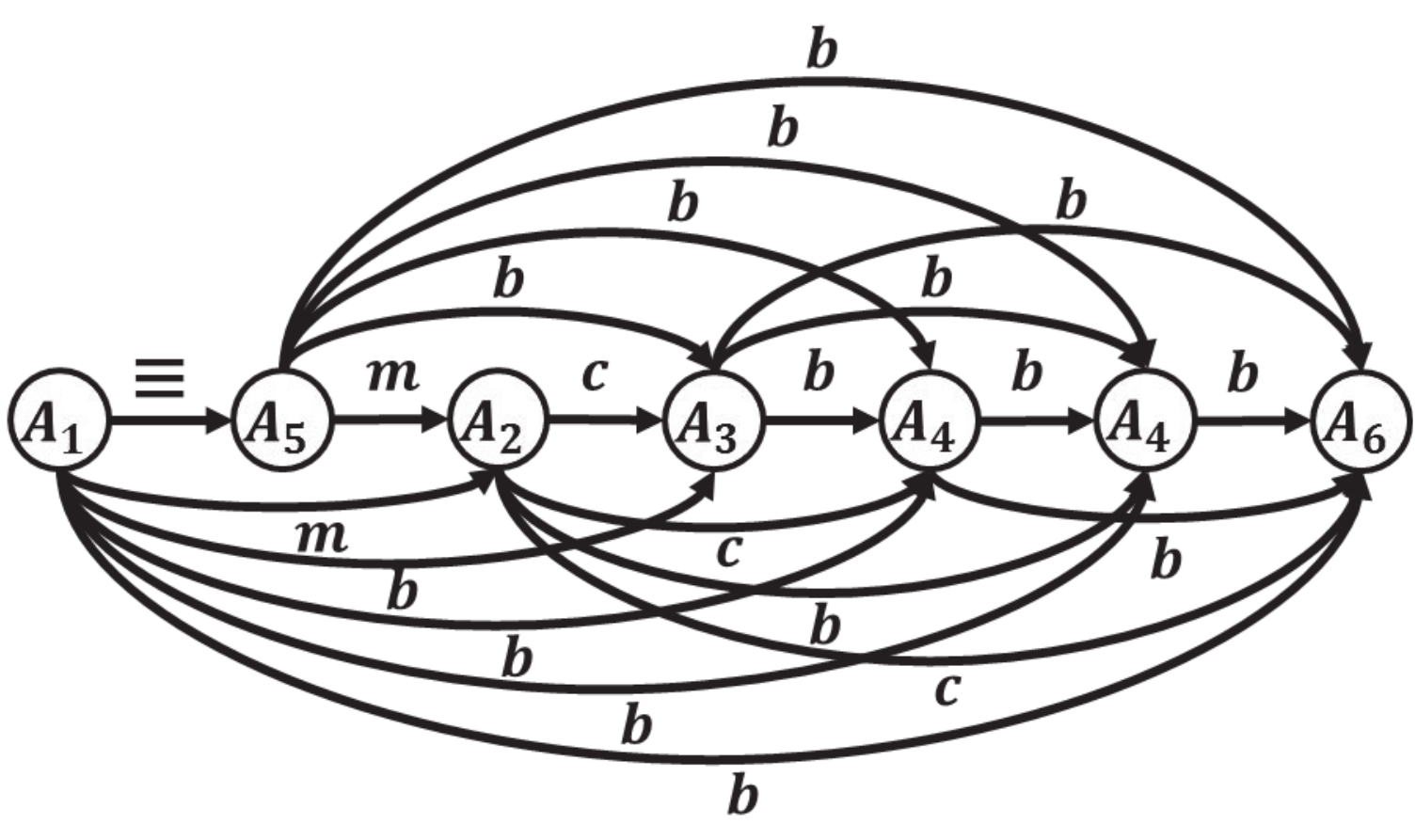}\label{fig:example-example-network-fully}
}
\subfigure[An exemplar arbitrary network structure.]{
\includegraphics[width=0.2\textwidth]{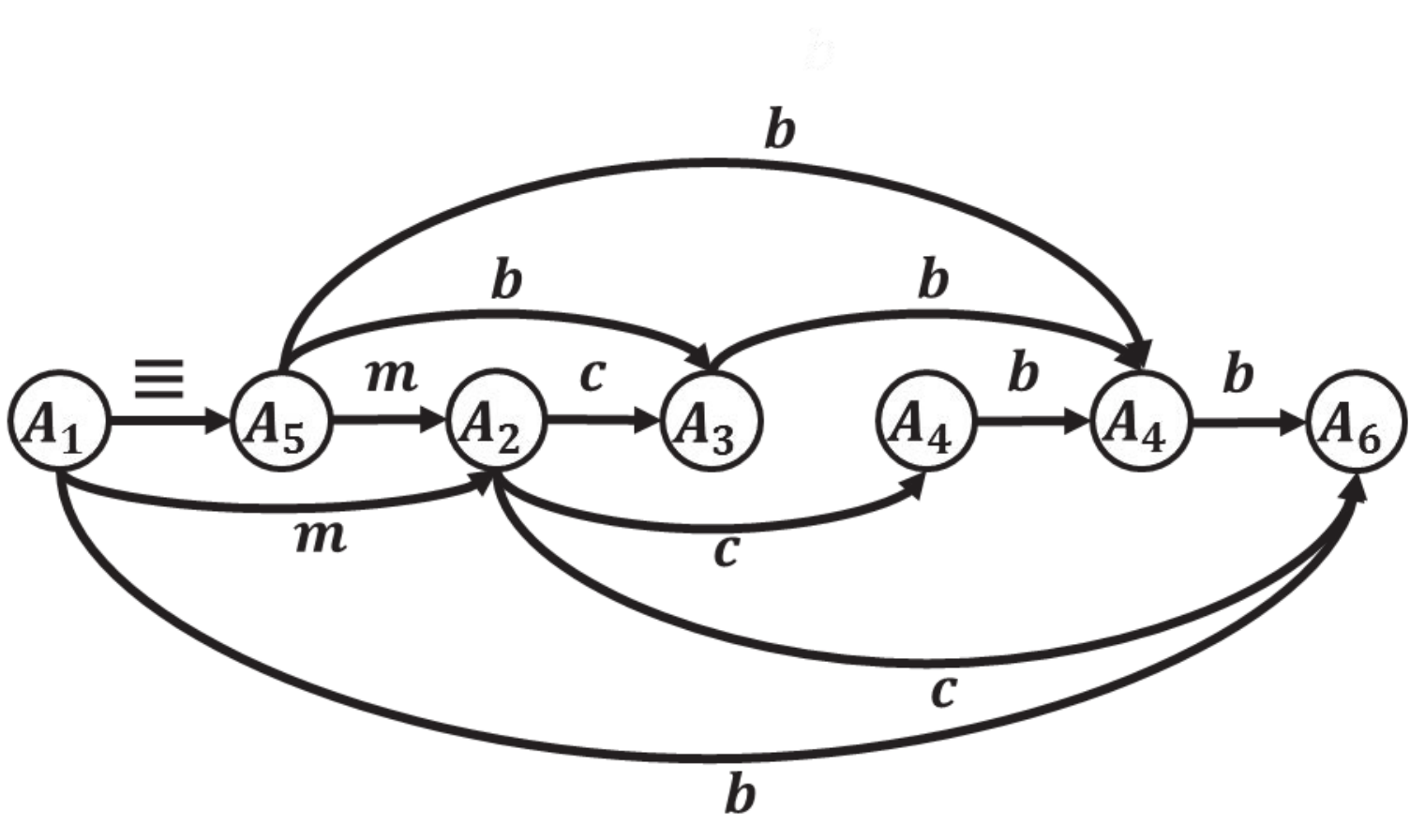}\label{fig:example-example-network-arbitrary}
}
\caption{Three possible network structures in link generation.}
\label{fig:example-network-structure}
\end{figure*}

\subsection{The Generative Process}
For each dataset $\mathbb{D}_{l} \subset \mathbb{D}$ containing a particular complex activity $1 \leq l \leq L$, our model assumes the whole generative process including node generation and link generation in Algorithm~\ref{algo-gp}. Notice that the optimal network structure $\mathbf{G}^*$ would be learned from $\mathbb{D}_{l}$ with details to be elaborated in section~\ref{section:structure}. The structure $\mathbf{G}^*$ demonstrates whether a link needs to be generated in the network. For example, to construct the network $G_d=(V_d,E_d)$ for a certain complex activity instance $d$, the link $e_{n',n}$ from $v_{n'}$ to $v_{n}$ is involved in $G_d$ if and only if $e_{n',n}$ obeys the structure of $\mathbf{G}^*$, denoted by $e_{n',n}\in E_d\models\mathbf{G}^*$.
\begin{algorithm}
  \caption{Generative process.}\label{algo-gp}
  \scriptsize
  \begin{algorithmic}[1]
    \Procedure{Generate-Networks}{$\mathbb{D}_{l}$}
    \State Learn an optimal network structure $\mathbf{G}^*$ from $\mathbb{D}_{l}$; \Comment{$\mathbf{G}^*=(V_{\mathbf{G}^*},E_{\mathbf{G}^*})$, }
    \State Choose a distribution $\theta_{\zeta}\sim\emph{Dirichlet}(\beta)$ ($\zeta=1,2,\ldots$);
    \For{each complex activity instance $d$ in $\mathbb{D}_{l}$} \Comment{construct a network $G_d=(V_d,E_d)$}
      \For{each node $v_{n}$ ($1\leq n\leq \lvert V_d \rvert$)}
        \State Choose a table $\mathbf{t}_{n}\sim\emph{CRP}(\mathbf{t}_{1},\ldots,\mathbf{t}_{n-1};\alpha)$;\Comment{Suppose $\mathbf{t}_{n}=T_\zeta$}
        \State Choose an atomic action $\mathbf{a}_{n}\mid\mathbf{t}_{n}\sim\emph{Multinomial}(\theta_{\zeta})$;\Comment{Suppose $\mathbf{a}_{n}=A_j$}
      \For{each link $e_{n',n}$ ($n-1\geq n'\geq 1$)}
          \If{$e_{n',n} \text{ obeys the structure of } \mathbf{G}^*$ (i.e. $e_{n',n}\models\mathbf{G}^*$)}
              \State Calculate $\mathbf{c}_{n',n}$ on the link $e_{n',n}$;\Comment{Suppose $\mathbf{c}_{n',n}=C_z$}
              \State Choose a relation $\mathbf{r}_{n',n}\mid\mathbf{a}_{n'},\mathbf{a}_{n},\mathbf{c}_{n',n}\sim\emph{Multinomial}(\varphi_{i,j,z})$;\Comment{Suppose $\mathbf{a}_{n'}=A_i$}
          \EndIf
      \EndFor
      \EndFor
    \EndFor
  \EndProcedure
\end{algorithmic}
\end{algorithm}

The joint distribution of variables $\mathbf{t}$, $\mathbf{a}$, and $\mathbf{r}$, is given by:
\begin{equation}
\label{eq:jointprobability-I}
\footnotesize
\begin{split}
P(\mathbf{a}, \mathbf{t}, \mathbf{r};\alpha, \beta)=&\prod_{d\in \mathbb{D}_{l}}\Big(\prod_{v_n\in V_d}\big(P(\mathbf{t}_{n}\mid \mathbf{t}_{1},\ldots,\mathbf{t}_{n-1};\alpha)P(\mathbf{a}_{n}\mid \mathbf{t}_{n};\beta)\big)\\
&\prod_{\substack{n'<n,\\ e_{n',n}\models\mathbf{G}^*}}{P(\mathbf{r}_{n',n}\mid\mathbf{a}_{n},\mathbf{a}_{n'},\mathbf{c}_{n',n})}\Big).
\end{split}
\end{equation}
The total number of variables $\mathbf{t}$, $\mathbf{a}$, and $\mathbf{r}$ are ${\sum}_{d\in\mathbb{D}_{l}}\lvert V_d \rvert$, ${\sum}_{d\in\mathbb{D}_{l}}\lvert V_d \rvert$ and $\lvert\mathbb{D}_{l}\rvert\cdot{\lvert E_{\mathbf{G}^*}\rvert}$, respectively.
It is worth noting that we often set $\ell=\mathop{\max}\limits_{d\in \mathbb{D}_{l}}{\{\lvert V_d \lvert\}}$, due to the fact that given a training dataset $\mathbb{D}_l$ a number of $\mathop{\max}\limits_{d\in \mathbb{D}_{l}}{\{\lvert V_d \lvert\}}$ tables are occupied at most.

\section{IBGN Learning}
In what follows we focus on how to learn the network structure and the parameter vectors $\theta$ and $\varphi$ from the training data $\mathbb{D}_{l}$ for a particular complex activity $l$.

\subsection{Learning Network Structure}
\label{section:structure}
Instead of configuring a network with chain-based or fully-connected links,
we would like to learn an network structure $\mathbf{G}$ in IBGN according to a score function that best matches the training data $\mathbb{D}_{l}$, i.e., learning from empirical data on which links to select in our constructed networks.

An IBGN model is built over a collection of variables $\mathbf{t}$ for table assignments, $\mathbf{a}$ for atomic-action assignments and $\mathbf{r}$ for relation assignments.
In detail, for a specific instance $d=<I_1,I_2,\ldots,I_{k}>$ with $k = \lvert V_d \rvert$, its corresponding network involves variables
$\mathbf{t}=\{\mathbf{t}_{1}, \mathbf{t}_{2}, \ldots, \mathbf{t}_{k}\}$,
$\mathbf{a}=\{\mathbf{a}_{1}, \mathbf{a}_{2}, \ldots, \mathbf{a}_{k}\}$,
$\mathbf{r}=\{\mathbf{r}_{1,2}, \mathbf{r}_{1,3}, \ldots, \mathbf{r}_{1,k}, \mathbf{r}_{2,3}, \mathbf{r}_{2,4}, \ldots, $ $\mathbf{r}_{2,k}, \ldots\ldots, \mathbf{r}_{k-1,k}\}$,
$\mathbf{c}=\{\mathbf{c}_{1,2}, \mathbf{c}_{1,3}, \ldots, \mathbf{c}_{1,k}, \mathbf{c}_{2,3}, \mathbf{c}_{2,4}, \ldots, \mathbf{c}_{2,k},$ $ \ldots\ldots, \mathbf{c}_{k-1,k}\}$.
To consider a general network structure, we first introduce a \emph{null} interval to make every instance in $\mathbb{D}_{l}$ having the same number of $k^*=\mathop{\max}\limits_{d\in\mathbb{D}_{l}} \{ \lvert V_d\rvert \}$ intervals. A \emph{null} interval is defined as $(+\infty,null,+\infty)$ such that its associated atomic action is \emph{null} and its temporal relation with any other intervals is \emph{null}. In other words, \emph{null} can be viewed as a special atomic action class.
For any instance of size $k<k^*$, $k^*-k$ \emph{null} intervals are appended at the rear of the instance. In this way, every instance has the same number of $k^*$ intervals. Correspondingly, any IBGN has a total number of $(k^*+1)k^*$ possible random variables, with $k^*$ possible variables for tables $\mathbf{t}$, $k^*$ possible variables for atomic actions $\mathbf{a}$, $\frac{k^*(k^*-1)}{2}$ possible variables for interval relations $\mathbf{r}$ and $\frac{k^*(k^*-1)}{2}$ possible variables for interval relation constraints $\mathbf{c}$.
An exemplar IBGN model can be described in Figure~\ref{fig:structure_variable}, where each $v_i$ is associated with variables $\mathbf{t}_i$ and $\mathbf{a}_i$, and each $e_{i,j}$ is associated with variable $\mathbf{r}_{i,j}$ and $\mathbf{c}_{i,j}$, with $1 \leq i \leq j \leq k^*$.
\begin{figure*}[!htbp]
\centering
\includegraphics[width=0.65\textwidth]{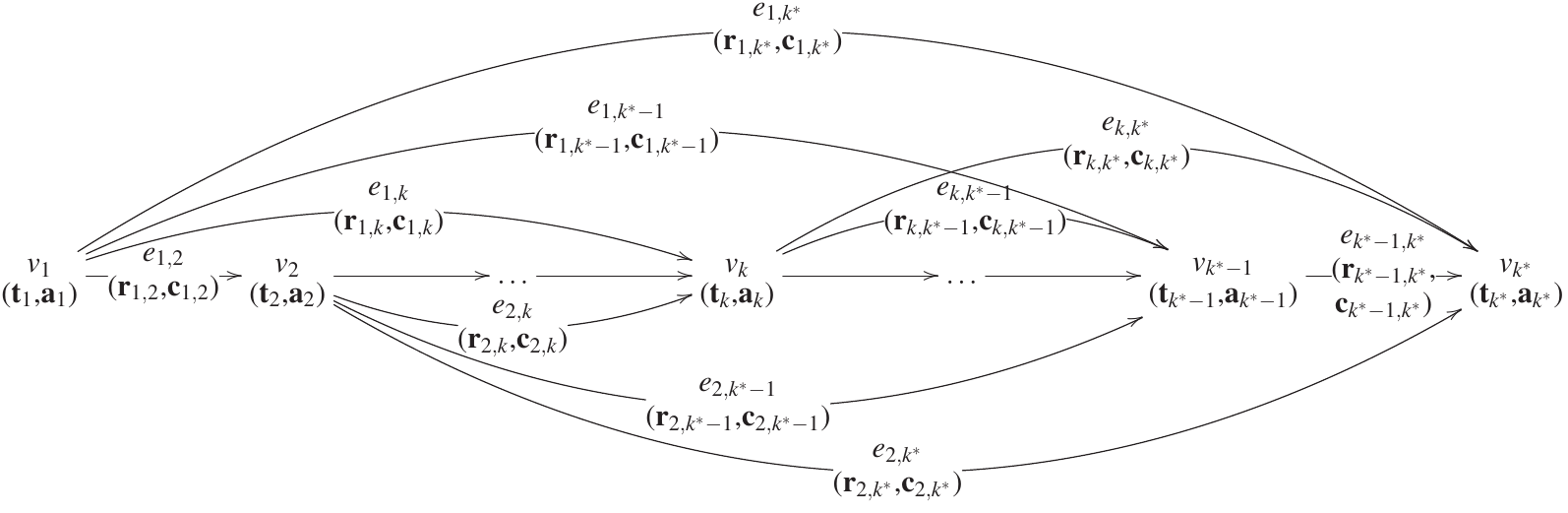}
\caption{The illustration of the IBGN network structure $\mathbf{G}$ associated with variables.}
\label{fig:structure_variable}
\end{figure*}
To this end, our IBGN structure learning problem is defined as to find a $\mathbf{G}=(V_{\mathbf{G}}, E_{\mathbf{G}})$ such that the score of $\mathbf{G}$ given $\mathbb{D}_{l}$ is maximum.

Next, we employ \emph{structure constraints} to translate the IBGN model to a corresponding problem in Bayesian networks.
\begin{definition}
\label{def:ibgn-structure}
Given an IBGN model $\mathbf{G}$, its corresponding Bayesian network is defined as a directed bipartite graph $\mathbf{G}'=(V_{\mathbf{G}'},E_{\mathbf{G}'})$ where $V_{\mathbf{G}'}=U_{\mathbf{G}'}\bigcup W_{\mathbf{G}'}$, with the structure constraints such that
\begin{equation}
\nonumber
\label{eq:bn}
\begin{split}
(1)&\lvert U_{\mathbf{G}'}\rvert=k^*,\quad \lvert W_{\mathbf{G}'}\rvert={k^*(k^*-1)}/{2},\\
(2)&\forall_{v'\in U_{\mathbf{G}'}}:{\widehat{v'}=M+1}, \quad \forall_{v'\in W_{\mathbf{G}'}}:{\widehat{v'}=8},\\
(3)&\forall_{v'\in U_{\mathbf{G}'}}:{\text{in-degree}(v')=0},\quad \forall_{v'\in W_{\mathbf{G}'}}:{\text{in-degree}(v')=0\text{ or }2},
\end{split}
\end{equation}
where $\widehat{v'}$ denotes the number of distinct elements of $v'$.
\end{definition}
A node $v'_n$ in $U_{\mathbf{G}'}$ ($1\leq n\leq \lvert U_{\mathbf{G}'}\rvert$) maps to the variable $\mathbf{a}_n$ in $\mathbf{G}$, while a node $v'_{n',n}$ in $W_{\mathbf{G}'}$ ($1\leq n'<n\leq \lvert U_{\mathbf{G}'}\rvert$) maps to the variable $\mathbf{r}_{n',n}$ in $\mathbf{G}$. That is, $U_{\mathbf{G}'}=\{\mathbf{a}_1,\ldots,\mathbf{a}_{k^*}\}$ and $W_{\mathbf{G}'}=\{\mathbf{r}_{1,2},\mathbf{r}_{1,3},\ldots,\mathbf{r}_{1,k^*},\ldots,\mathbf{r}_{k^*-1,k^*}\}$. Notice that a \emph{null} is introduced to represent the absence of a node or a relation in an instance of $\mathbb{D}_{l}$ (constraint (2)), and thus there are $M+1$ possible atomic-action assignments for a node in $U_{\mathbf{G}'}$ and $\lvert\mathbb{R}\rvert+1=8$ possible relation assignments for a node in $W_{\mathbf{G}'}$. Moreover, any node $v'_n$ in $U_{\mathbf{G}'}$ has no parent, and any node $v'_{n',n}$ in $W_{\mathbf{G}'}$ has either being connected to the nodes $v_{n'}$ and $v_{n}$ in $U_{\mathbf{G}'}$ or not being connected to any node (constraint (3)). That means a link $e_{n',n}$ exists in $\mathbf{G}$ if and only if its corresponding node $v'_{n',n}$ in $\mathbf{G}'$ has $\text{in-degree}(v'_{n',n})=2$. The structure of $\mathbf{G}'$ associated with the variables is illustrated in Figure~\ref{fig:structure_bipartite}.
\begin{figure}[!htbp]
\centering
\includegraphics[width=\columnwidth]{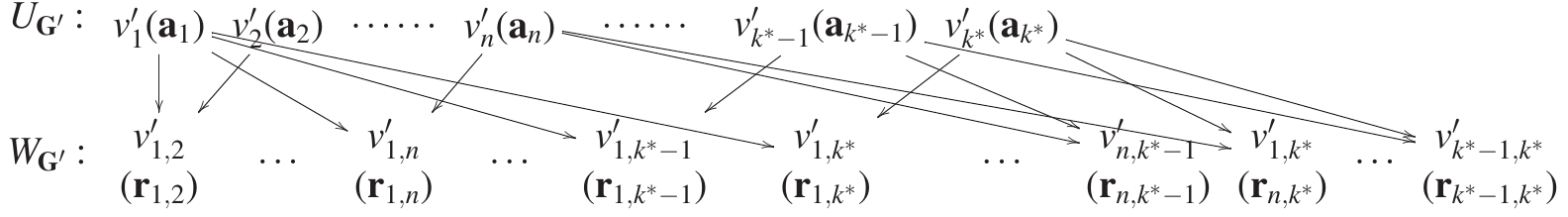}
\caption{The corresponding bipartite graph $\mathbf{G}'$ of the IBGN structure $\mathbf{G}$.}
\label{fig:structure_bipartite}
\end{figure}

Now, the problem of determining whether a link should exist in IBGN is converted to the problem of finding a set of links $E_{\mathbf{G}'}$ with the maximal score under the above constraints. In particular, we consider the Bayesian Information Criterion ($\mathrm{BIC}$) as the score function, which addresses the overfitting issue by introducing a penalty term in the structure, as proved in the Appendix~\ref{appendix:bic}:
\begin{equation}
\label{eq:bic}
\small
\begin{split}
&\text{BIC-Score}(\mathbf{G}:\mathbb{D}_{l})=\text{BIC-Score}(\mathbf{G}':\mathbb{D}_{l})+\lambda\\
&=\mathop{\max}\limits_{\Phi}{\log\mathop{\prod}\limits_{i=1}^{\lvert W_{\mathbf{G}'}\rvert}\mathop{\prod}\limits_{j=1}^{\widehat{\Pi_i}}\mathop{\prod}\limits_{k=1}^{8}}{\phi_{ijk}^{n'_{ijk}}}
-\frac{\log\lvert\mathbb{D}_{l}\rvert}{2}\cdot\mathop{\sum}\limits_{i=1}^{\lvert W_{\mathbf{G}'}\rvert}{7\cdot\widehat{\Pi_{i}}}+\lambda,
\end{split}
\end{equation}
where $\lambda$ is a constant.
$\Pi_i$ denotes the parents of the $i$-th node in $W_{\mathbf{G}'}$ and $\widehat{\Pi_i}=(M+1)^{\lvert \Pi_i\rvert}$, which is the number of possible instantiations of the parent set $\Pi_i$ of the $i$-th node. In fact, $\widehat{\Pi_i}=1 \text{ or } (M+1)^2$. For example, suppose $v'_{n',n}$ be the $i$-th node in $W_{\mathbf{G}'}$ and $v'_{n'}, v'_n\in U_{\mathbf{G}'}$. If the links $v'_{n'}\rightarrow v'_{n',n}$ and $v'_{n}\rightarrow v'_{n',n}$ exist in $E_{\mathbf{G}'}$ (i.e. $\text{\emph{in-degree}}(v'_{n',n})=2$), then the node $v'_{n',n}$ has two parents (i.e. $\Pi_i=\{v'_{n'}, v'_{n}\}$) whose number of categories is $(M+1)$, and thus $\widehat{\Pi_i}=(M+1)^2$; otherwise, $\widehat{\Pi_i}=1$.
In addition, $\Phi$ is the parameter vector such that $\forall_{ijk}:\phi_{ijk}=P(x_{ik}\mid \pi_{ij})$, where $x_{ik}$ and $\pi_{ij}$ denotes that the $i$-th node in $W_{\mathbf{G}'}$ is assigned with the $k$-th element and its parent nodes are assigned with the $j$-th element (i.e. an instantiation of its parent set $\Pi_i$), respectively; $n'_{ijk}$ indicates how many instances of $\mathbb{D}_{l}$ contain both $x_{ik}$ and $\pi_{ij}$. Note that any node $v'_{n',n}\in W_{\mathbf{G}'}$ has eight elements (i.e. $\widehat{v'_{n',n}}=8$).
At this stage, several techniques can be employed to learn the structure of $\mathbf{G}'$ efficiently~\cite{de2011efficient,fan2015improved,liu2015multipe}. After finding the best structure $\mathbf{G}'^*=\mathop{\arg\max}\limits_{\mathbf{G}'}{\text{BIC-Score}(\mathbf{G}':\mathbb{D}_{l})}$, a link $e_{n',n}$ is in $\mathbf{G}$ if and only if $\text{\emph{in-degree}}(v'_{n',n}\in W_{\mathbf{G}'^*})=2$.

\subsection{Learning Parameters}
We first estimate the parameters $\theta=\{\theta_{1}, \theta_{2}, \ldots, \theta_{\ell}\}$ for node generation. Since the variable $\mathbf{t}_n$ is latent for each node $v_n$ in our generative process, we shall approximately estimate the posterior distribution on $\mathbf{t}$ by Gibbs sampling. Formally, we marginalize the joint probability in Eq.\eqref{eq:jointprobability-I} and derive the posterior probability of latent variable $\mathbf{t}_n$ being assigned to the table $T_{\zeta}$ ($1\leq\zeta\leq\ell$) as follows:
\begin{equation}
\scriptsize
\label{eq:gibbssampling}
\nonumber
\begin{split}
&P(\mathbf{t}_{n}=T_{\zeta}\mid\mathbf{t}_{-n}, \mathbf{a}, \mathbf{r};\alpha, \beta)
\propto\frac{na_{\zeta,\tilde{\mathbf{i}},-n}+\beta_{\zeta,\tilde{\mathbf{i}}}}{\sum_{i'=1}^{M}{(na_{\zeta,i',-n}+\beta_{\zeta,i'})}}
\times \left\{
  \begin{array}{l l}
    \frac{nt_{\zeta}}{n+\alpha_{\zeta}-1} & \quad \text{if $\zeta\leq NT_{n}$}\\
    \frac{\alpha_{\zeta}}{n+\alpha_{\zeta}-1} & \quad \text{if $\zeta=NT_{n}+1$}
  \end{array}
  \right.
\end{split}
\end{equation}
where $na_{\zeta,i'}$ is the count that nodes have been assigned to the atomic action type $A_{i'}$ at the table $T_\zeta$, and $nt_{\zeta}$ is the count of nodes in $G_d$ that has been assigned to the table $T_\zeta$. $\tilde{\mathbf{i}}$ refer to the atomic action assignments for nodes $v_n$. $NT_{n}$ is the count of occupied tables with $\sum_{i=1}^{NT_{n}}nt_{i}=n-1$. The suffix $-n$ of $na$ means the count that does not include the current assignment of table for the node $v_{n}$. $\alpha_{\zeta}$ is the tuning parameter for the $\zeta$-th table selection during \emph{CRP}; $\beta_{\zeta,i}$ is the Dirichlet prior for the $i$-th atomic action conditioned on the $\zeta$-th table. We provide the detailed derivations of the Gibbs sampling in Appendix~\ref{appendix:sampling}. By sampling the latent tables following the above distribution, the distributions of $\theta_{\zeta}$ ($1\leq\zeta\leq \ell$) can be estimated as
\begin{equation}
\small
\nonumber
\begin{split}
\theta^*_{\zeta,i}=\frac{na_{\zeta,i}+\beta_{\zeta,i}}{\sum_{i'=1}^{M}{(na_{\zeta,i'}}+\beta_{\zeta,i'})}&~\quad \text{, for each }1\leq i\leq M.
\end{split}
\end{equation}

Normally, the hyperparameters $\alpha$ and $\beta$ are set as fixed values before the execution of a Gibbs sampler.
In our IBGN model, $\alpha$ and $\beta$ involve a number of $\ell$ and $\ell M$ prior parameters, respectively.
As they are unfortunately unknown beforehand, it is difficult to manually encode each parameter to proper values.
As a result, we need to instead learn each hyperparameter to obtain reasonable results.
The adoption of Gibbs sampling enables us to seamlessly incorporate the tuning of these hyperparameters as presented in Algorithm~\ref{alg-paratune}. \begin{algorithm}
  \caption{Gibbs Sampling Algorithm with Hyperparameter Tuning Method Embedded.}
  \label{alg-paratune}
  \scriptsize
  \begin{algorithmic}[1]
  \Procedure{Gibbs\_Sampler\_with\_Hyperparameter\_Estimate}{}
  \State $s = 0$; \Comment{The initial iteration of the Gibbs sampler.}
  \State Initialize the values of $\alpha$ and $\beta$;
  \Repeat
  \State $s \leftarrow s+1$; \Comment{The $s$-th iteration of the Gibbs sampler.}
  \State Get the samples of latent tables generated by the Gibbs sampler at current iteration;
  \State Update hyperparameters $\alpha^{new}=f(\alpha)$, $\beta^{new}=g(\beta)$;
  \Until{termination conditions are reached;}
  \EndProcedure
\end{algorithmic}
\end{algorithm}
The stop condition may be that a predefined max iterations has been reached or that an estimation function converges to a given threshold.
To update the hyperparameters $\alpha$ and $\beta$, a convergent method proposed by Minka~\cite{minka2000estimating} is used as follows:
\begin{equation}
\nonumber
\footnotesize
\begin{split}
&\alpha^{(s+1)}_{\zeta}=f(\alpha_{\zeta})=\alpha_{\zeta}
\times\frac{\sum_{s}\Psi(\sum_{d\in\mathbb{D}_l}nt^{(s)}_{\zeta}+\alpha_{\zeta})-\Psi(\alpha_{\zeta})}{\sum_{s}\Psi(\sum_{d\in\mathbb{D}_l}\lvert V_d\rvert+\sum_{\zeta'=1}^{\ell}{\alpha_{\zeta'}})-\Psi(\sum_{\zeta'=1}^{\ell}{\alpha_{\zeta'}})},\\
&\beta^{(s+1)}_{\zeta,i}=g(\beta_{\zeta,i})=\beta_{\zeta,i}
\times\frac{\sum_{s}\Psi(na^{(s)}_{\zeta,i}+\beta_{\zeta,i})-\Psi(\beta_{\zeta,i})}{\sum_{s}\Psi(\sum_{i'=1}^{M}(na^{(s)}_{\zeta,i'}+\beta_{\zeta,i'}))-\Psi(\sum_{i'=1}^{M}\beta_{\zeta,i'})},\\
\end{split}
\end{equation}
where $\Psi$ is digamma function, and the superscript $(s)$ indicates the sample generated by the Gibbs sampler at the $s$-th iteration.

Next, we estimate the parameters $\varphi=\{\varphi_{i,j,z}:1\leq i,j\leq M, 1\leq z\leq\lvert\mathbb{C}\rvert\}$ for link generation.
It can be seen that the probability distribution of variable $\mathbf{r}_{n',n}$ relies on the triplet $(A_i,A_j,C_z)$ only (where $\mathbf{a}_{n'}=A_i$, $\mathbf{a}_{n}=A_j$ and $\mathbf{c}_{n',n}=C_z$), and thus we can learn these parameters by maximum likelihood estimate method.
In our IBGN model, given a triplet $(A_i,A_j,C_z)\in\mathbb{A}\times\mathbb{A}\times\mathbb{C}$, the conditional probability distribution on $\mathbf{r}_{n',n}$ is a multinomial over all possible relations in $C_{z}$.
Then, the likelihood of the parameter $\varphi_{i,j,z}$ for $P(\mathbf{r}_{n',n}\mid A_i,A_j,C_z)$ with respect to $\mathbb{D}_l$ is:
\begin{equation}
\label{eq:model-I-MLE}
\small
\begin{split}
&L(\varphi_{i,j,z};\mathbb{D}_l)=\prod_{d\in\mathbb{D}_l}{P(\mathbf{r}_{n',n}\mid A_i,A_j,C_z;\varphi_{i,j,z})}=\prod_{r=1}^{\lvert C_z\rvert}{\varphi_{i,j,z,r}^{nr_{i,j,z,r}}},
\end{split}
\end{equation}
By applying a Lagrange multiplier to ensure $\sum_{r=1}^{\lvert C_{z}\rvert}{\varphi_{i,j,z,r}}=1$, maximum likelihood estimate for $\varphi_{i,j,z,r}$ is\\
\begin{equation}
\label{eq:model-I-MLE}
\small
\begin{split}
&\varphi^*_{i,j,z,r}=\frac{nr_{i,j,z,r}}{\sum_{r'=1}^{\lvert C_{z}\rvert}{nr_{i,j,z,r'}}},
\end{split}
\end{equation}
where $nr_{i,j,z,r}$ is the number of links $e_{n',n}$ are labeled with the $r$-th relation in $\mathbb{D}_{l}$, with $\mathbf{a}_{n'}=A_i$, $\mathbf{a}_{n}=A_j$, $\mathbf{c}_{n',n}=C_{z}$ and $e_{n',n}\models\mathbf{G}^*$.
Note the trivial cases of $\varphi^*_{i,j,z,r}=1$ for $1\leq z\leq 7$ as each of them contains only one element as indicated in Figure~\ref{fig:union}.

Now, by integrating out the latent variable $\mathbf{t}$ with all the parameters derived above, the probability of the occurrence of a new instance given the $l$-th type of complex activity is estimated below
\begin{equation}
\small
\begin{split}
&P(\mathbf{a}',\mathbf{r}'; \mathbb{D}_{l})=\prod_{\substack{i}}(\sum_{\zeta}{\theta_{\zeta,i}})\times\prod_{\substack{i,j,z,r\models\mathbf{G}^*}}{\varphi_{i,j,z,r}},
\end{split}
\end{equation}
where $\mathbf{a}'$ and $\mathbf{r}'$ are the sets of atomic actions and their relations in the new instance, respectively, and $i,j,z,r\models\mathbf{G}^*$ indicate only these links obeying the structure of $\mathbf{G}^*$ are counted. To predict which type of complex activity a new instance belongs to, we simply evaluate the posterior probabilities over each of the $L$ possible types of complex activities as
\begin{equation}
\small
\begin{split}
l^*=\mathop{\arg\max}\limits_{1\leq l\leq L}{P(\mathbf{a}',\mathbf{r}'; \mathbb{D}_{l})}.
\end{split}
\end{equation}

\section{Experiments}
Experiments are carried out on three benchmark datasets as well as our in-house dataset on recognizing complex hand activities.
In addition to the proposed approach \textbf{IBGN}, two variants with fixed network structures are also considered: One is \textbf{IBGN-C} for chain-based structures, where only the links between two neighbouring nodes in networks are constructed;
The other one is \textbf{IBGN-F} for fully-connected structures, where all pairwise links in networks are constructed.
Several well-established models are employed as the comparison methods, which include IHMM~\cite{modayil2008improving}, dynamic Bayesian network (DBN)~\cite{hu2008cigar} and ITBN~\cite{zhang2013modeling}, where IHMM and DBN are implemented on our own, and ITBN is obtained from the authors. All internal parameters are tuned for best performance for a fair comparison. The standard evaluation metric of \emph{accuracy} is used, which is computed as the proportion of correct predictions.

\paragraph{Experimental Set-Ups}
The Raftery and Lewis diagnostic tool~\cite{raftery1992many} is employed to detect the convergence of the Gibbs sampler (Algorithm~\ref{alg-paratune}) for the IBGN family. It has been observed that overall we have a short burn-in period, which suggests the Markov chain samples are mixing well.
Thus $nt$ and $na$ are set to the averaged counts of their first $1000$ samples after convergence.
In addition, we utilize the branch-and-bound algorithm~\cite{de2011efficient} for constraints-based structure learning. This approach can strongly reduce the time and memory costs for learning Bayesian network structures based on the BIC score function (Eq.~\eqref{eq:bic}) without losing global optimality guarantees.
Besides, to avoid the division-by-zero issue in practice (i.e. $\sum_{r'=1}^{\lvert C_z\rvert}{nr_{i,j,z,r'}}=0$ in Eq.~\eqref{eq:model-I-MLE}), we instead use $\varphi^*_{i,j,z,r}=\frac{nr_{i,j,z,r}+\rho}{\sum_{r'=1}^{\lvert C_{z}\rvert}{nr_{i,j,z,r'}}+\rho\lvert C_z\rvert}$ by introducing a small smoothing constant $\rho$ ($\rho=10^{-5}$) in the following experiments.

\paragraph{Time Complexity Analysis}
The time complexities of IBGN-C and IBGN-F are $O(M^2+\lvert\mathbb{D}_{l}\rvert \mathrm{T}_n {\ell}^2)$ for training each complex activity category, where $\mathrm{T}_n$ is the number of iterations executed in Algorithm~\ref{alg-paratune}. IBGN has an extra time complexity of $O(\sum_{p=0}^{\log_2{K}}\binom{K}{p})$ for structure learning, where $K=\frac{k^*(k^*-1)}{2}$. On the other hand, the time complexities of the IBGN family at the testing stage are the same, which is $O( M \ell)$ for a single test instance.

\subsection{Experiments on Three Existing Benchmark Datasets}
\setcounter{paragraph}{0}
\paragraph{Datasets and Preprocessing}
The three publicly-available complex activity recognition datasets collected from different types of cameras and sensors are considered, as summarized in Table.~\ref{tab-experiment-I-property}. We employ these datasets in our evaluation due to their distinctive properties: The OSUPEL dataset~\cite{brendel2011probabilistic} can be used to evaluate the case where only a handful of atomic action types and simple relations are recorded; Opportunity~\cite{roggen2010collecting} is challenging as it contains a large number of atomic action types and also involves intricate interval relations in instances; CAD14~\cite{lillo2014discriminative} represents the datasets having relatively larger number of complex activity categories.
\begin{table}[!htbp]
\caption{Summary of the publicly available datasets.}
\scriptsize
\centering
\scalebox{0.85}{
\begin{tabular}{|p{0.28\columnwidth}|p{0.24\columnwidth}|p{0.24\columnwidth}|p{0.24\columnwidth}|}
\hline
  & OSUPEL & Opportunity & CAD14\\
\hline
Application type & Basketball play & Daily living & Composable activities\\
Recording devices & one ordinary camera & 72 on-body sensors of 10 modalities & one RGB-D camera\\
E.g. of atomic actions & shoot, jump, dribble, etc.& sit, open door, wash cup, etc.& clap, talk phone, walk, etc.\\
E.g. of complex activities & two offensive play types & relax, cleanup, coffee time, early morning, sandwich time & talk and drink, walk while clapping, talk and pick up, etc.\\
\hline
No. of atomic action types & 6 & 211 & 26\\
No. of complex activity types & 2 & 5 & 16\\
No. of instances & 72 & 125 & 693\\
No. of intervals per instance & 2-5 & 1-78 & 3-11 \\
\hline
\end{tabular}
}
\label{tab-experiment-I-property}
\end{table}

To recognize atomic actions in each dataset, we adopt the methods developed in their respective corresponding work.
That is, we employed the dynamic Bayesian network models that are used to model and recognize each atomic action including shoot, jump, dribble and so on for OSUPEL~\cite{zhang2013modeling}. Similarly, for CAD14~\cite{lillo2014discriminative} we adopted the hierarchical discriminative model to recognize atomic action such as clap, talk phone and so on through an discriminative pose dictionary. For the sensor-based Opportunity dataset, we utilized the activity recognition chain system (ARC) ~\cite{bulling2014tutorial} to recognize atomic action recognition from sensors. It is worth mentioning that we can also recognize \emph{null} type of complex activity by labeling the intervals that are not annotated to any activities in the datasets.

\paragraph{Comparison under an Ideal Condition}
First of all, the competing models are evaluated under the condition that all intervals are correctly detected.
Table~\ref{tab-experiment-I-comparison-precision} displays the averaged accuracy results over 5-fold cross-validations,
where the proposed IBGN family clearly outperforms other methods with a big margin on all three datasets.
The reason is that IBGNs engage the rich interval relations among atomic actions. Although ITBN can also encode relations, it however fails with the multiple occurrences of the same atomic actions or when inconsistent relations existing among training instances. As a considerable amount of multiple occurrences of the same atomic actions and inconsistent temporal relations exist in CAD14, ITBN performs the worst.
It can also be seen that IBGN-F with fully-connected relations performs better than IBGN-C on the Opportunity and CAD14 datasets where relations are intricate. However, IBGN-F might be overfitted when relations are simple, e.g. the OSUPEL dataset.
Overall IBGN outperforms its two variants by its ability to adaptively learn network structures from data, where fixed structures might face the issue of either overfitting or underfitting.
\begin{table}[!htbp]
\caption{Accuracy comparison on different datasets.}
\label{tab-experiment-I-comparison-precision}
\begin{center}
\scriptsize
\begin{tabular}{|c|c|c|c|c|c|c|}
\hline
&\multicolumn{1}{c|}{IHMM}&\multicolumn{1}{c|}{DBN}&\multicolumn{1}{c|}{ITBN}&\multicolumn{1}{c|}{IBGN-C}&\multicolumn{1}{c|}{IBGN-F}&\multicolumn{1}{c|}{IBGN}\\
\hline
OSUPEL&0.53&0.58&0.69&0.79&0.76&\textbf{0.81}\\
\hline
Opportunity&0.74&0.83&0.88&\textbf{0.98}&0.96&\textbf{0.98}\\
\hline
CAD14&0.93&0.95&0.51&0.97&\textbf{0.98}&\textbf{0.98}\\
\hline
\end{tabular}
\end{center}
\end{table}

\paragraph{Robustness Tests under Atomic-Level Errors}
In practice the accuracy of atomic action recognition will significantly affect complex activity recognition results.
To evaluate the performance robustness, we also compare the competing models under atomic action recognition errors.
First, it is important to check whether our model is robust under label perturbations of atomic-action-level (or atomic-level for short). To show this, we synthetically perturb the atomic-level predictions.
Figure~\ref{fig:system_comparison_with_errors} reports the comparison results on Opportunity under two common atomic-level errors.
It can be seen that IBGNs are more robust to misdetection errors where atomic actions are not detected or are falsely recognized as another actions.
We perturbed the true labels with error rates ranging from 10-30 percents to simulate synthetic misdetection errors.
Similarly, we also perturbed the start and end time of intervals with noises of 10-30 percents to simulate duration-detection errors where interval durations are falsely detected.
It is also clear that IBGNs outperform other competing models under duration-detection errors.

In addition, we report the evaluated performances under real detected errors caused by the ARC system for atomic-level recognition.
We chose three classifiers for atomic action recognition from the ARC system, i.e. kNN, SVM and DT.
Features such as mean, variance, correlation and so on are selected by setting a time-sliced window of 1s.
After classification, each interval is assigned to an atomic action type.
As shown in Figure~\ref{fig:system_comparison_with_errors}, the models which can manage interval relations are relatively more robust to the atomic-level errors than other models, such as ITBN and IBGNs.
Moreover, it is evident that IBGNs are noticeably more robust than ITBN with around 15\% -- 87\% performance boost.
IBGN performs the best among its family because it is more capable of handling the structural variability in complex activities than the other two variants, which may avoid more noise existing in training and testing information. Note similar conclusions are also obtained on the OSUPEL and CAD14 datasets.
\begin{figure}[!htbp]
\centering
\includegraphics[width=0.85\columnwidth]{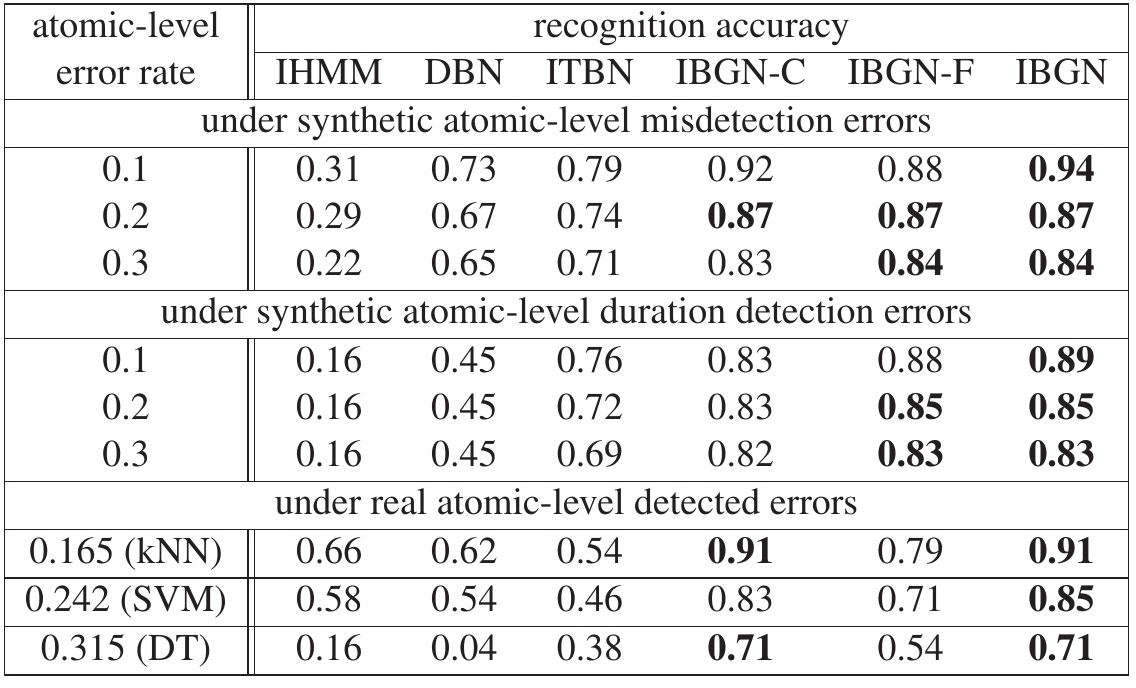}
\caption{Accuracies under atomic-level errors on Opportunity dataset.}
\label{fig:system_comparison_with_errors}
\end{figure}

\subsection{Our Complex Hand Activity Dataset}
\setcounter{paragraph}{0}
\paragraph{Data Collection}
We propose a new complex activity dataset on depth camera-based complex hand activities on performing American Sign Language (ASL).
It is an ongoing effort, and at the moment it contains 3,480 annotated instances, which is already about 5-fold larger than existing ones.
As illustrated in Fig.~\ref{fig:experiment-example}, complex activities in our dataset are defined as selected ASL hand-actions. There are 20 atomic actions, which are defined as the states of individual fingers, either straight or bent. It is important to realize that in a complex activity, there could be multiple occurrences of the same atomic action, as is also exemplified in Fig.~\ref{fig:example-f} where an action A2 appears twice in the sub-network.
\begin{figure}
\centering
\subfigure[\emph{Ketchup} (Case I)]{
\includegraphics[width=0.46\columnwidth]{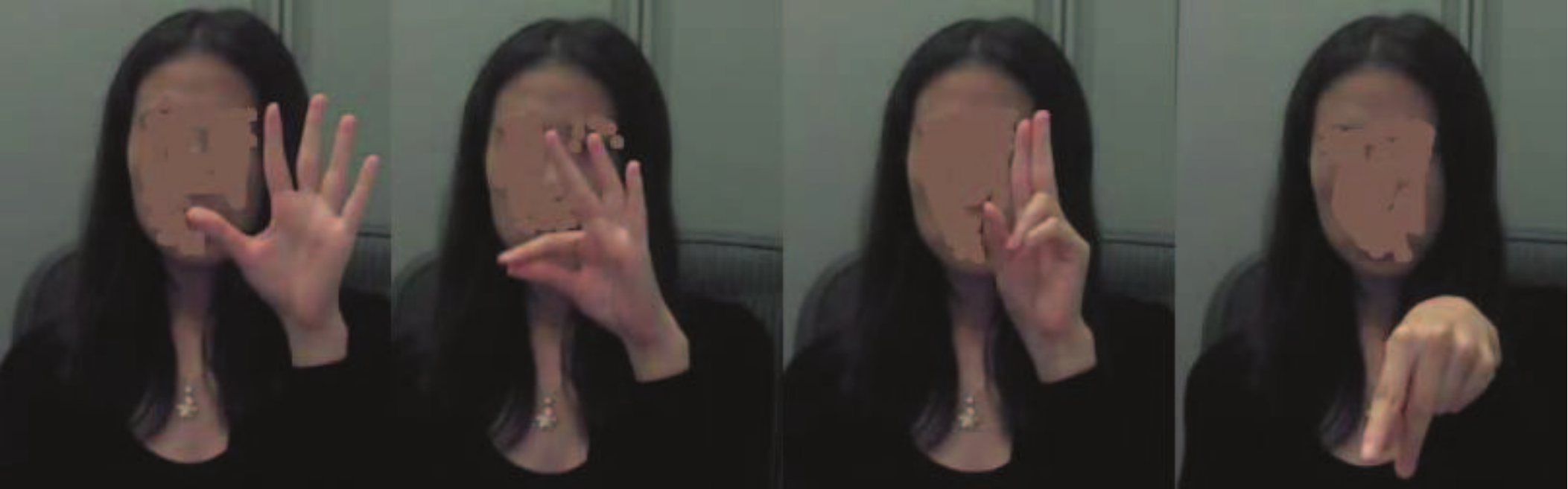}
\label{fig:example-a}
}
\subfigure[\emph{Ketchup} (Case II)]{
\includegraphics[width=0.46\columnwidth]{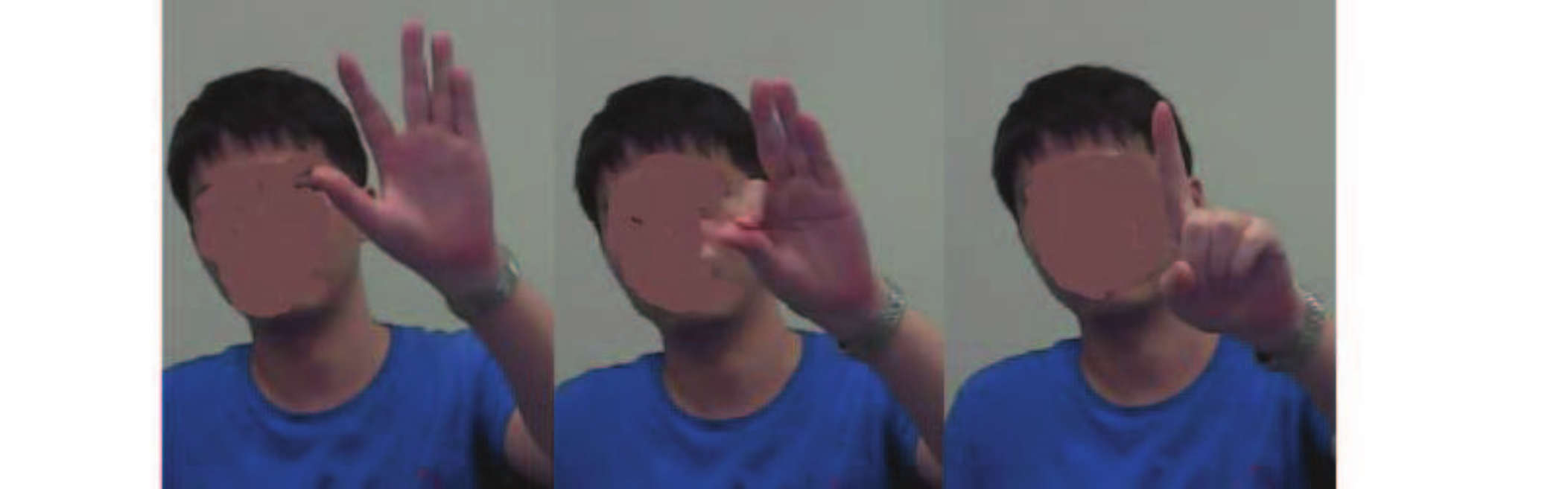}
\label{fig:example-b}
}
\subfigure[Intervals (Case I)]{
\includegraphics[width=0.46\columnwidth]{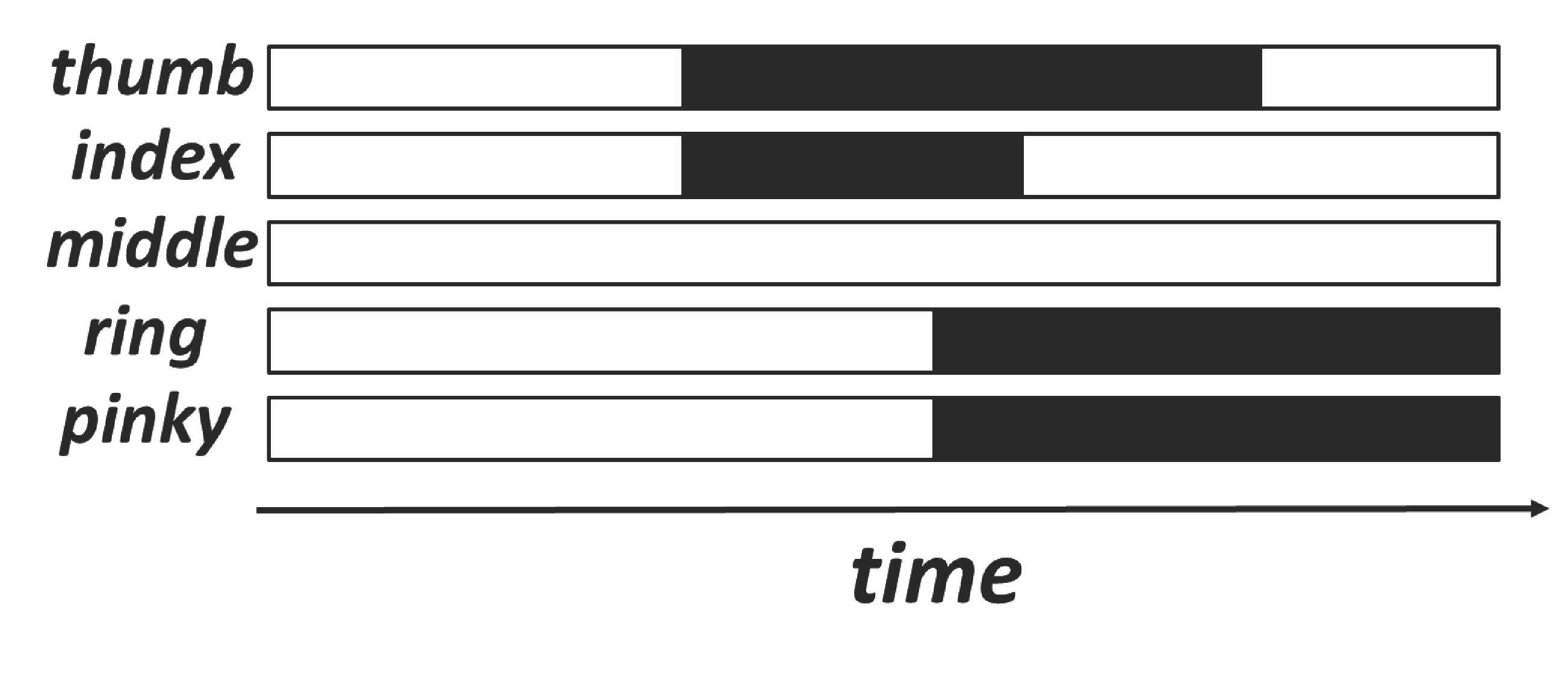}
\label{fig:example-c}
}
\subfigure[Intervals (Case II)]{
\includegraphics[width=0.46\columnwidth]{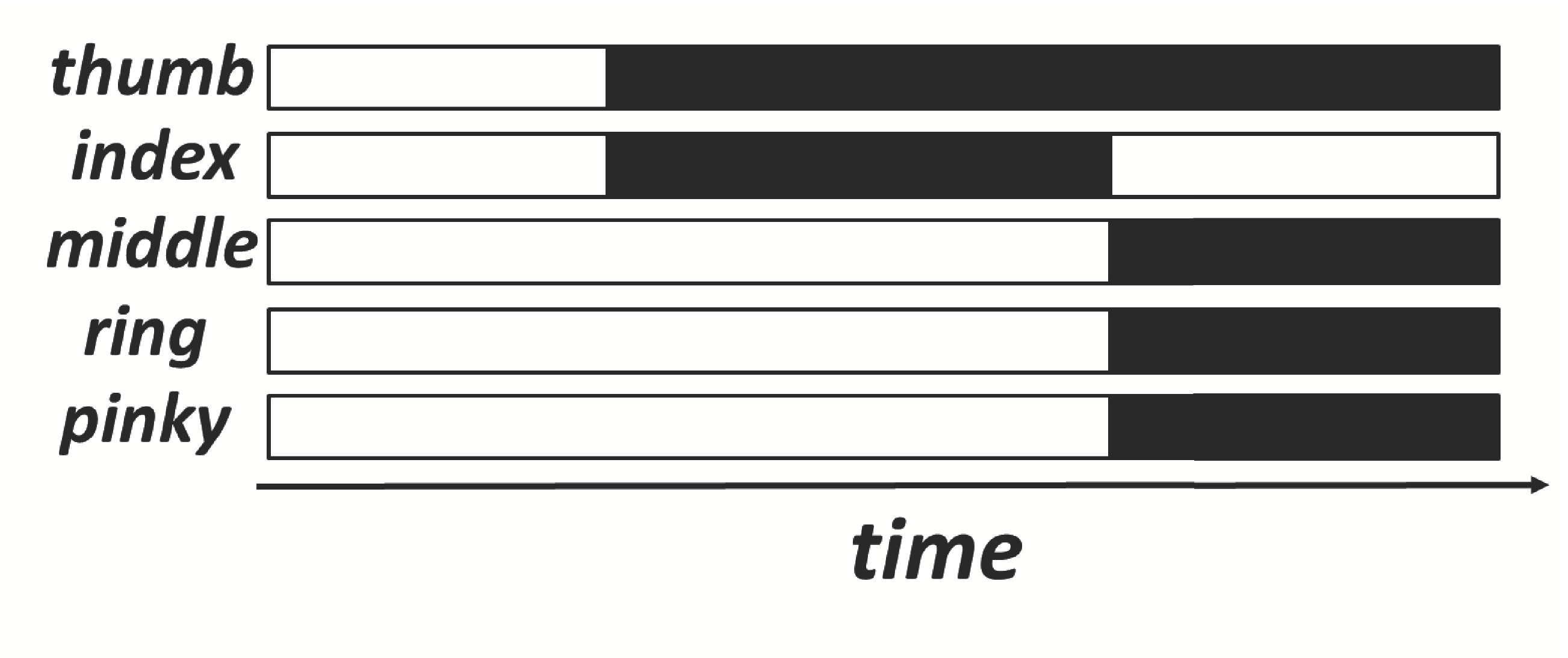}
\label{fig:example-d}
}
\\
\subfigure[A fraction of the interval-based network (Case I)]{
\includegraphics[width=0.46\columnwidth]{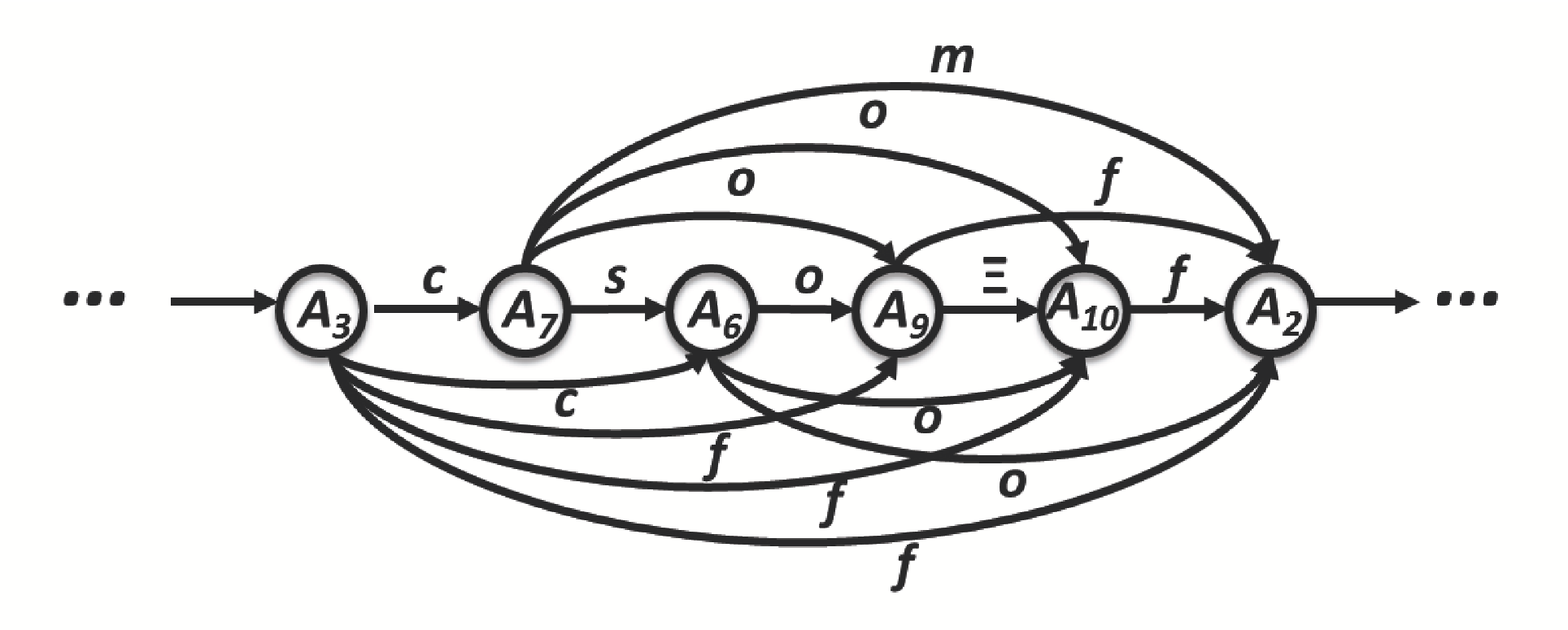}
\label{fig:example-e}
}
\subfigure[A fraction of the interval-based network(Case II)]{
\includegraphics[width=0.46\columnwidth]{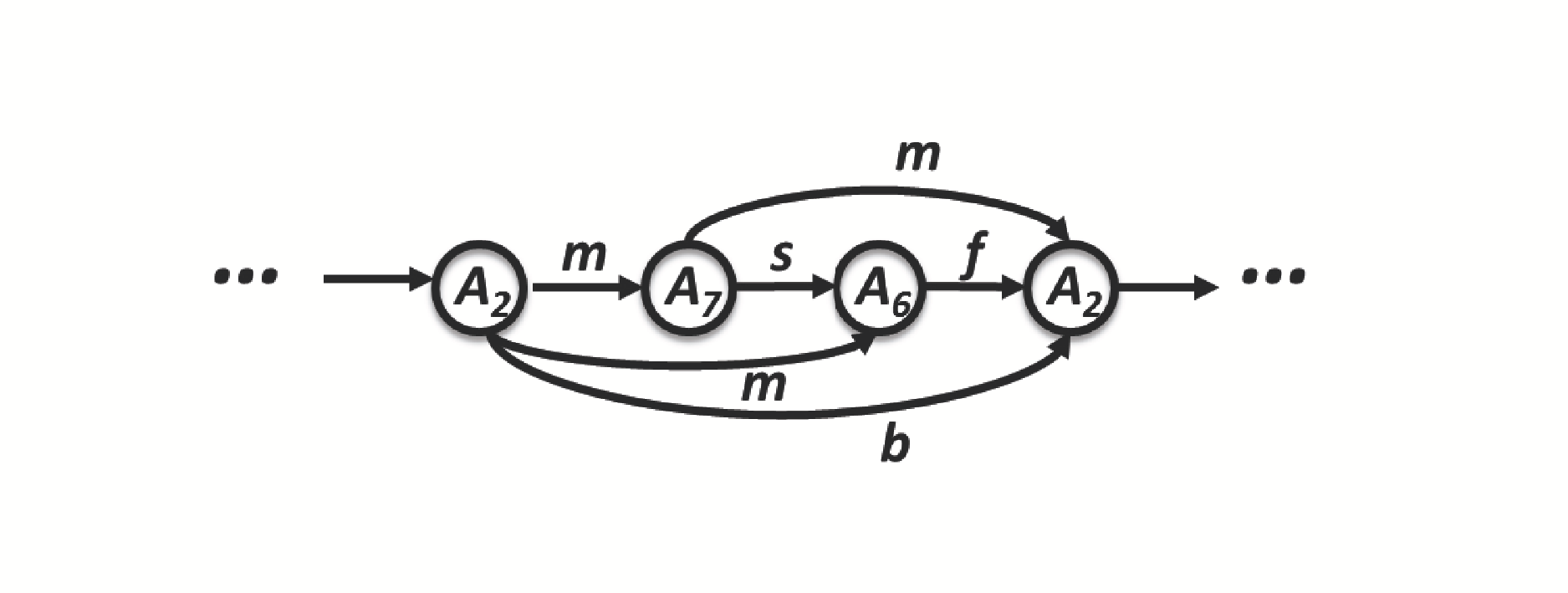}
\label{fig:example-f}
}
\caption{{Two instances of the complex activity ASL word \emph{ketchup}. $A_1$-$A_5$ refer to the straightening state of thumb, index, middle, ring and pinky fingers, respectively (white bars), while $A_6$-$A_{10}$ refer to the bending state of these fingers (black bars).}}
\label{fig:experiment-example}
\end{figure}

Sixteen subjects participate in the data collection, with various factors being taken into consideration to add to the data diversities. Subjects of different genders, ages groups, races are present in the dataset. The male to female ratio is 12 to 4, races ratio is 14 to 2 and participants' ages span from around 15 to 40.
For each subject, the depth image sequences are recorded with a front-mount SoftKinetic camera while performing designated complex activities in office settings, with a frame-rate of 25 FPS, image size of 320$\times$240, and hand-camera distance of around 0.6-1m.
In total, the dataset contains 19 ASL hand-action complex activities, with each having 145-290 instances collected among all subjects. Each of the instance is comprised of 5-17 atomic action intervals. The 19 ASL hand-actions are \emph{air, alphabets, bank, bus, gallon, high school, how much, ketchup, lab, leg, lady, quiz, refrigerator, several, sink, stepmother, teaspoon, throw, xray}.
%

\paragraph{Atomic Hand Action Detection}
To detect atomic-level hand-actions, we make use of the existing hand pose estimation system~\cite{xu2016estimate} with a postprocessing step to map joint location prediction outputs to the bent/straight states of fingers.
To evaluate performance of the interval-level atomic action detection results, we follow the common practice and use the intersection-over-union of intervals with a 50$\%$ threshold to identify a hit/false alarm/missing, respectively. Finally F1 score is used based on the obtained precision and recall values. Note here the finger bent states are considered as foreground intervals.

\paragraph{Robustness Tests under Atomic-Level Errors}
We first evaluate the performance on simulated synthetic misdetection errors and duration-detection errors. From Fig.~\ref{tab-system-comparison-with-synthetic-errors}, we observe that overall IBGN is notably more robust than other approaches, meanwhile IHMM consistently produces the worst results. Our model is relatively robust in the presence of atomic-level errors.
\begin{figure}
\centering
\subfigure[with misdetection errors]{
\includegraphics[width=0.46\columnwidth]{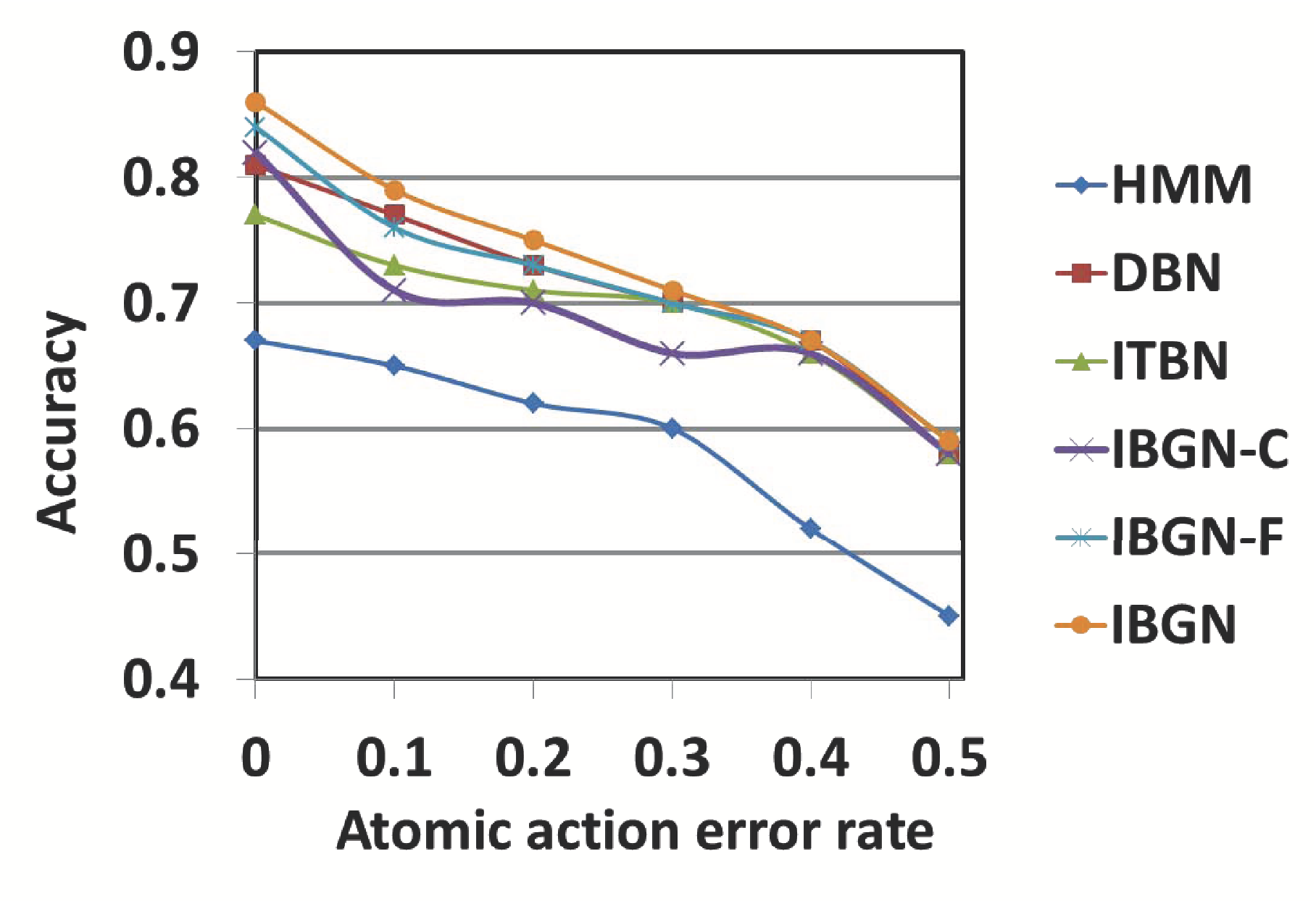}
\label{fig:example-a}
}
\subfigure[with duration detection errors]{
\includegraphics[width=0.46\columnwidth]{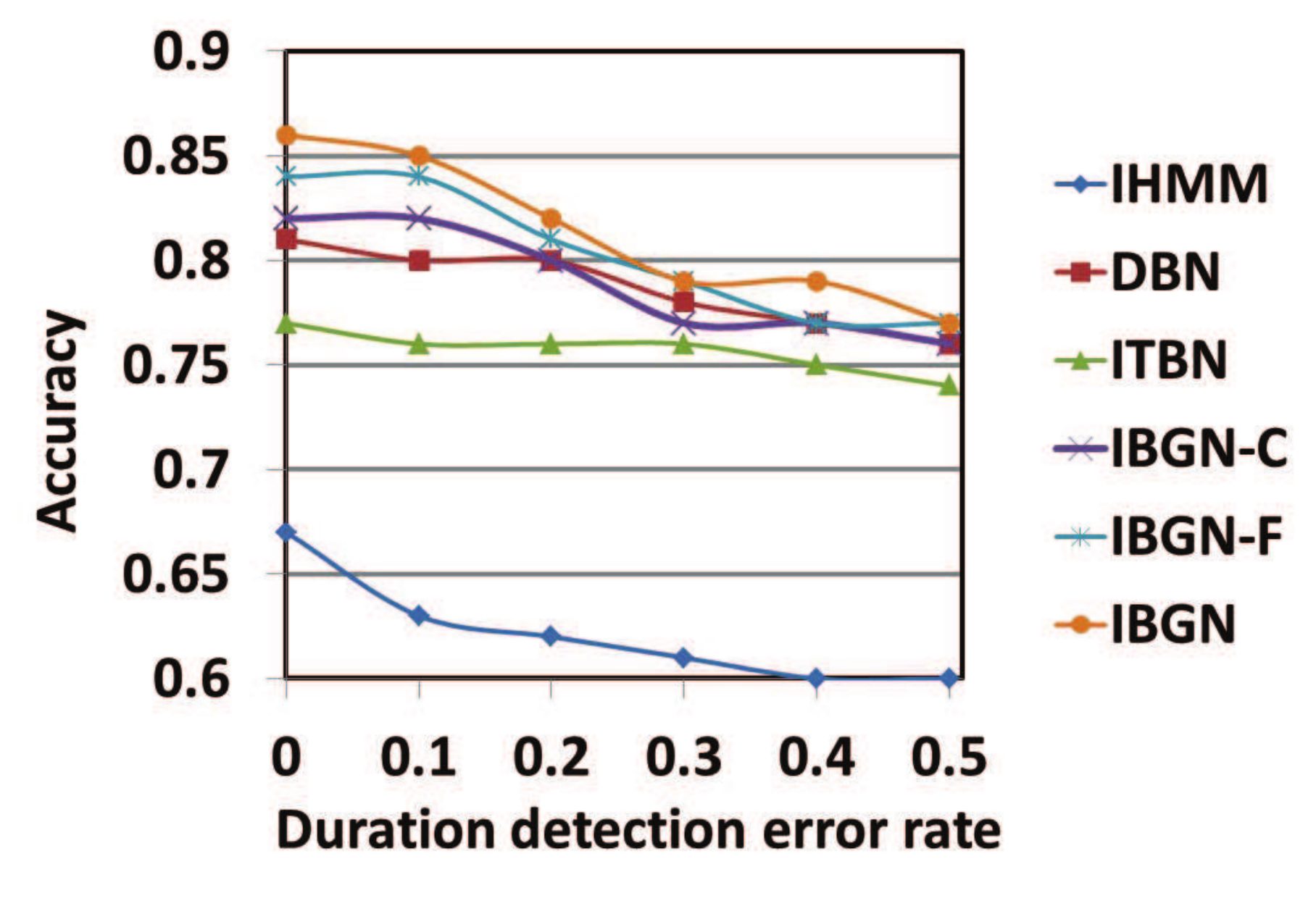}
\label{fig:example-b}
}
\caption{{Performance changes vs. perturbation of atomic-action-level errors.}}
\label{tab-system-comparison-with-synthetic-errors}
\end{figure}

Now we are ready to show the performance of our model when working with our atomic-level predictor as mentioned previously. Overall our atomic-level predictor achieves F1 score of 0.724.
%
Table~\ref{tab-experiment-II-comparison-precision} summarizes the final accuracy comparisons of the six competing approaches based on our atomic-level predictor vs. the atomic-level ground-truth labels on our hand-action dataset. It is not surprising that in both scenarios IBGN again significantly outperforms the rest approaches.
It is worth noting that taking into account the challenging task of atomic-level hand pose estimation on its own, the gap in performances of $0.58$ vs. $0.86$ on predicting over 19 complex activity categories is reasonably, which is also certainly one thing we should improve over in the future.
\begin{table}[!htbp]
\caption{{Complex activity accuracy comparisons on our hand-action dataset.}}
\label{tab-experiment-II-comparison-precision}
\begin{center}
\scriptsize
\scalebox{1}{
\begin{tabular}{|c|c|c|c|c|c|}
\hline
\multicolumn{1}{|c|}{IHMM}&\multicolumn{1}{c|}{DBN}&\multicolumn{1}{c|}{ITBN}&\multicolumn{1}{c|}{IBGN-C}&\multicolumn{1}{c|}{IBGN-F}&\multicolumn{1}{c|}{IBGN}\\
\hline
\multicolumn{6}{|c|}{with real atomic-level prediction}\\
\hline
0.43&0.51&0.54&0.49&0.55&\textbf{0.58}\\
\hline
\multicolumn{6}{|c|}{with atomic-level ground-truth (ideal situation)}\\
\hline
0.67&0.81&0.77&0.82&0.84&\textbf{0.86}\\
\hline
\end{tabular}
}
\end{center}
\end{table}
Fig.~\ref{fig-experiment-real-confusionmatri} presents the confusion matrix of IBGN working with our atomic-level predictor. We observe that our system is able to recognize the ASL words such as \emph{xray}, \emph{alphabets} and \emph{quiz} very well. At the same time, several ASL words turn to be difficult to deal with, with accuracy under 50\%. This may mainly due to the relatively low accuracy of the atomic-level predictor we are using on the particular atomic actions.
\begin{figure}
\centering
\includegraphics[width=1.0\columnwidth]{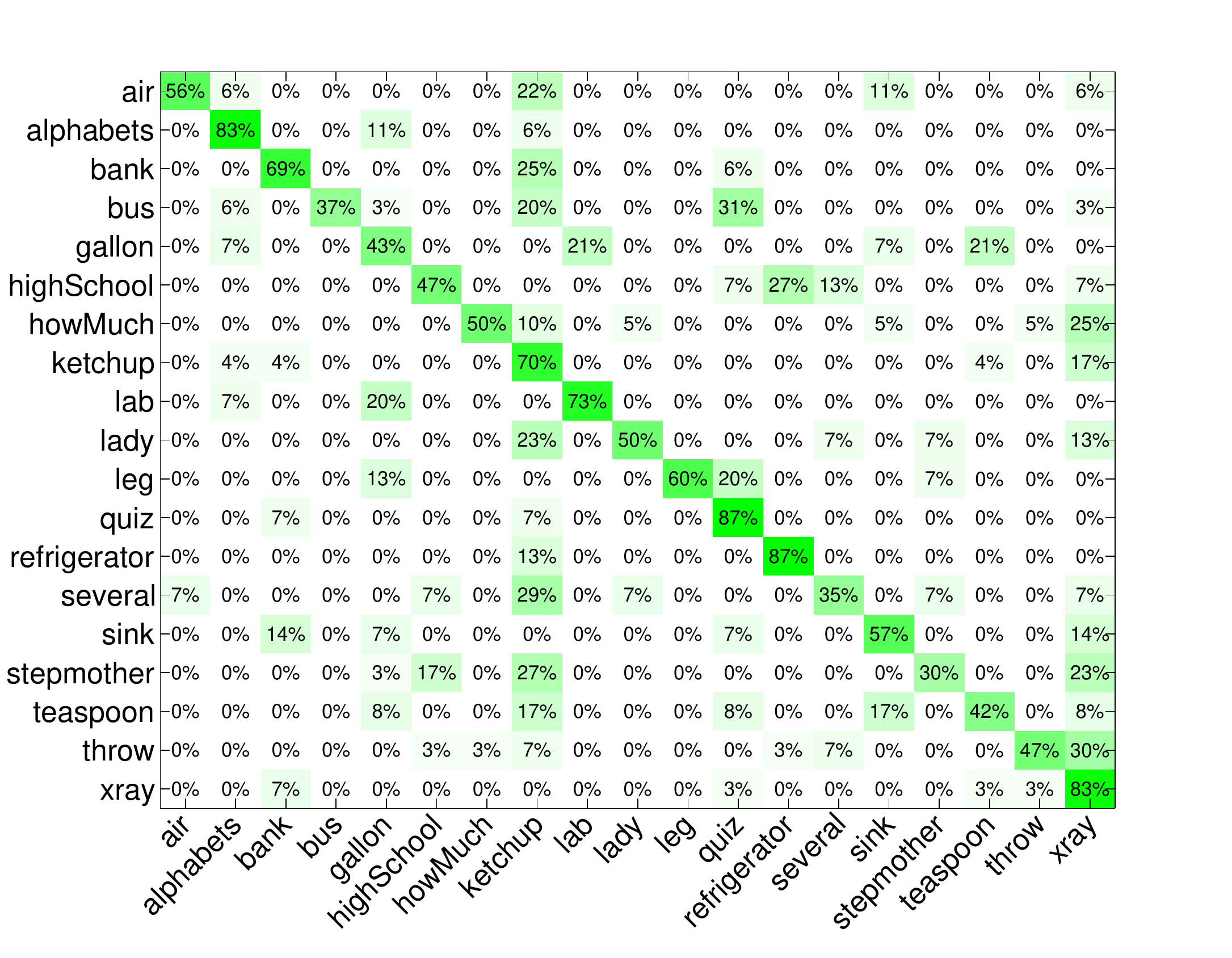}
\caption{{Confusion matrix of IBGN on our hand-action dataset.}}
\label{fig-experiment-real-confusionmatri}
\end{figure}

\section{Conclusion}
We present an interval-based Bayesian generative network approach to account for the latent structures of complex activities by constructing probabilistic interval-based networks with temporal dependencies in complex activity recognition.
In particular, the Bayesian framework and the novel application of Chinese restaurant process (\emph{CRP}) of our IBGN model enable us to explicitly capture inherit structural variability in each of the complex activities.
In addition, we make publicly available a new complex hand activity dataset dedicated to the purpose of complex activity recognition,
which contains around an-order-of-magnitude larger number of annotated instances.
Experimental results suggest our proposed model outperforms existing state-of-the-arts by a large margin in a range of well-known testbeds as well as our new dataset. It is also shown that our approach is rather robust to the errors introduced by the low-level atomic action predictions from raw signals.
As part of future work, we are considering relaxing the assumption that the IBGN models share the same structure for representing multiple instances of the same complex activity, and will instead learn more flexible structures for each class of complex activities by introducing latent structure variables that decide whether a link should exist in an instance. Also, we will continue the finalization of our hand-action dataset, improving the atomic-level atomic action prediction method, as well as attempting toward the establishment of standardized comparisons on exiting systems.

\section{Acknowledgments}
This research was supported in part by grant CQU903005203326 from the Fundamental Research Funds for the Central Universities in China, grants R-252-000-473-133 and R-252-000-473-750 from the National University of Singapore, and A*STAR JCO grants 15302FG149 and 1431AFG120.

\begin{appendices}
\section{Proofs of Theorem~\ref{thm:consistency}}
\label{appendix:consistency}
We first prove that the composition operation $\circ$ on the interval relation union set $\mathbb{S}$ satisfies the associative law. The set of all the $127$ unions is denoted by $\mathbb{S}$.
\begin{definition}
\text{(\textbf{Composition Product of Two Relation Sets})}\\
The composition product of two sets of interval relations $R,R'\in\mathbb{S}$, i.e. $R=\{r_{1},\ldots,r_{\mid R\mid}\}$ and $R'=\{r'_{1},\ldots,r'_{\mid R'\mid}\}$, where any $r_{i},r'_{j}\in\mathbb{R}$, is defined as
$R\circ R'=\bigcup_{i,j}{(r_{i}\circ r'_{j})}$.
\end{definition}

\begin{lemma}
\label{thm:associative}
\text{(\textbf{Associative Law on Composition})}\\
\begin{equation}
\notag
\small
\begin{split}
(\mathbf{R}_{x}\circ\mathbf{R}_{y})\circ\mathbf{R}_{z}=\mathbf{R}_{x}\circ(\mathbf{R}_{y}\circ\mathbf{R}_{z}),&~~~~\text{for any $\mathbf{R}_{x},\mathbf{R}_{y},\mathbf{R}_{z}\in\mathbb{S}$.}
\end{split}
\end{equation}
\end{lemma}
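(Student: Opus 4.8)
The plan is to reduce associativity on the full union lattice $\mathbb{S}$ to associativity on the seven basic relations, and then to establish the latter by a semantic argument. First I would record the distributive behaviour of the set-valued composition. Unfolding the definition $R\circ R'=\bigcup_{r\in R,\,r'\in R'}(r\circ r')$ twice shows that
\begin{equation}
\notag
\small
(\mathbf{R}_x\circ\mathbf{R}_y)\circ\mathbf{R}_z=\bigcup_{a\in\mathbf{R}_x,\,b\in\mathbf{R}_y,\,c\in\mathbf{R}_z}\big((a\circ b)\circ c\big),
\end{equation}
and symmetrically $\mathbf{R}_x\circ(\mathbf{R}_y\circ\mathbf{R}_z)=\bigcup_{a,b,c}\big(a\circ(b\circ c)\big)$, where in each term $a,b,c\in\mathbb{R}$ are basic relations. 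Hence it suffices to prove $(a\circ b)\circ c=a\circ(b\circ c)$ for every basic triple $(a,b,c)\in\mathbb{R}^3$.

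For the basic case I would pass to the semantics. Assign to each $r\in\mathbb{R}$ its \emph{extension} $\rho(r)$, the set of admissible ordered interval pairs $(I,J)$ (those obeying the start-time ordering of the paper) for which $I\,r\,J$ holds. The crucial structural fact is that the seven relations are jointly exhaustive and pairwise disjoint: every admissible pair realises exactly one basic relation, so the family $\{\rho(r):r\in\mathbb{R}\}$ partitions the admissible pairs, and in particular each $\rho(r)$ is nonempty. I would then verify that the transitivity table of Figure~\ref{fig:transitivity} computes genuine relational composition, i.e. $\rho(a\circ b)=\rho(a)\odot\rho(b)$, where $P\odot Q=\{(I,K):\exists J,\ (I,J)\in P\text{ and }(J,K)\in Q\}$. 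This is precisely the soundness and completeness of the table: a basic relation $r''$ lies in the entry $a\circ b$ exactly when witnesses $I\,a\,J$, $J\,b\,K$, $I\,r''\,K$ exist.

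Extending $\rho$ additively by $\rho(R)=\bigcup_{r\in R}\rho(r)$ and using that $\odot$ distributes over unions gives $\rho(R\circ R')=\rho(R)\odot\rho(R')$ for all $R,R'\in\mathbb{S}$. Since relational composition $\odot$ is associative, $\rho\big((\mathbf{R}_x\circ\mathbf{R}_y)\circ\mathbf{R}_z\big)=\rho\big(\mathbf{R}_x\circ(\mathbf{R}_y\circ\mathbf{R}_z)\big)$. Finally, because the extensions $\rho(r)$ are nonempty and pairwise disjoint, distinct subsets of $\mathbb{R}$ have distinct unions, so $\rho$ is injective on $\mathbb{S}$; equality of extensions therefore forces equality of the relation sets, which is the claim. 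This last observation in fact proves the set-level identity directly, making the reduction of the first paragraph optional but reassuring.

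The main obstacle is the verification that the table realises relational composition together with the JEPD property, namely the step $\rho(a\circ b)=\rho(a)\odot\rho(b)$. This is a finite but delicate case analysis over the $7\times7$ table entries; the paper's ordering constraint $i<j$, which cuts Allen's thirteen relations down to seven, keeps the bookkeeping tractable. A purely combinatorial alternative is to check the $7^3=343$ basic triples $(a\circ b)\circ c=a\circ(b\circ c)$ directly from the given table, but I expect the semantic route to be both shorter and more transparent.
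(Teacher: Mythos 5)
Your proposal is correct, and its first step is identical to the paper's: both unfold the set-valued composition twice to reduce the identity to triples of basic relations, $(a\circ b)\circ c=a\circ(b\circ c)$ with $a,b,c\in\mathbb{R}$. The difference lies in how that base case is handled. The paper simply asserts it ("the composition on the seven interval relation set satisfies the associative law") and moves on; this is precisely the non-trivial content of the lemma, and it is itself a set-level statement, since $a\circ b$ is already a subset of $\mathbb{R}$ rather than a single relation --- a point you correctly identify and the paper glosses over. Your semantic argument fills this gap: interpreting each relation by its extension $\rho(r)$, establishing that the transitivity table computes genuine relational composition ($\rho(a\circ b)=\rho(a)\odot\rho(b)$), invoking associativity of $\odot$, and recovering the set-level identity from injectivity of $\rho$ (which needs exactly the nonemptiness and pairwise disjointness you cite). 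This is the standard route by which Allen-style algebras are shown associative, and it even subsumes the distributive reduction, as you note. What the paper's version buys is brevity at the cost of assuming the key fact; what yours buys is a self-contained proof, at the cost of the finite soundness-and-completeness verification of the $7\times 7$ table (or, alternatively, the $343$-triple check), which any honest proof must pay somewhere.
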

\begin{proof}
Let
\begin{small}
$\mathbf{R}_{x}=\{r_{x,1},\ldots,r_{x,X}\}$, $\mathbf{R}_{y}=\{r_{y,1},\ldots,r_{y,Y}\}$
\end{small}
and
\begin{small}
$\mathbf{R}_{z}=\{r_{z,1},\ldots,r_{z,Z}\}$
\end{small}
, where any $r_{i,j}\in\mathbb{R}$. Then,
\begin{small}
$(\mathbf{R}_{x}\circ\mathbf{R}_{y})\circ\mathbf{R}_{z}=(\bigcup_{i,j}{(r_{x,i}\circ r_{y,j})})\circ\mathbf{R}_{z}
=\bigcup_{k}((\bigcup_{i,j}{(r_{x,i}\circ r_{y,j})})\circ r_{z,k})
=\bigcup_{i,j,k}({(r_{x,i}\circ r_{y,j})\circ r_{z,k}})$
\end{small}
and
\begin{small}
$\mathbf{R}_{x}\circ(\mathbf{R}_{y}\circ\mathbf{R}_{z})=\mathbf{R}_{x}\circ(\bigcup_{j,k}{(r_{y,j}\circ r_{z,k})})
=\bigcup_{i}(r_{x,i}\circ(\bigcup_{j,k}{(r_{y,j}\circ r_{z,k})}))
=\bigcup_{i,j,k}({r_{x,i}\circ(r_{y,j}\circ r_{z,k})})$
\end{small}
Since the composition on the seven interval relation set satisfies the associative law that is both left and right associative, the associative law also holds on $\mathbb{R}$, which is closed under the composition operation, i.e.
\begin{small}
$(\mathbf{r}_{x}\circ\mathbf{r}_{y})\circ\mathbf{r}_{z}=\mathbf{r}_{x}\circ(\mathbf{r}_{y}\circ\mathbf{r}_{z})$, for any $\mathbf{r}_{x},\mathbf{r}_{y},\mathbf{r}_{z}\in\mathbb{R}$.
\end{small}
So, we have
\begin{small}
$(\mathbf{R}_{x}\circ\mathbf{R}_{y})\circ\mathbf{R}_{z}=\mathbf{R}_{x}\circ(\mathbf{R}_{y}\circ\mathbf{R}_{z})$
\end{small}.
\end{proof}

\begin{lemma}
\label{thm:path-consistency}
\text{(\textbf{Path Consistency})}\\
Given an IBGN $G_d$, if $\mathbf{x}_{i,k}\in\mathbf{x}_{i,j}\circ\mathbf{x}_{j,k}$ for any $1\leq i<j<k\leq \lvert V_d \rvert$, then for any path $\mathbf{P}=v_{i}\rightarrow v_{i'}\rightarrow v_{i''}\rightarrow\ldots\rightarrow v_{k'}\rightarrow v_{k}$ in $G_d$, $\mathbf{x}_{i,k}\in\mathbf{x}_{i,i'}\circ\mathbf{x}_{i',i''}\circ\ldots\circ\mathbf{x}_{k',k}$.
\end{lemma}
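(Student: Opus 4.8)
The plan is to prove this by induction on the number of edges in the path $\mathbf{P}$, using the triangle hypothesis to peel off one node at a time, Lemma~\ref{thm:associative} to regroup the resulting composition, and a monotonicity property of the composition product to pass set-containments through $\circ$. First I would record the one fact that is not yet stated explicitly but follows at once from the Composition Product definition: the product is monotone under inclusion, i.e. if $R\subseteq R'$ then $R\circ S\subseteq R'\circ S$ and $S\circ R\subseteq S\circ R'$ for all $S$. This is immediate because $R\circ S=\bigcup_{r\in R,\,s\in S}(r\circ s)$, so enlarging either factor can only enlarge the union. I would also note that, by the network convention that every link runs from a smaller to a larger index, the nodes along $\mathbf{P}$ necessarily have strictly increasing indices $i<i'<i''<\cdots<k'<k$; this is exactly what licenses applying the triangle hypothesis to the ordered triples encountered below.

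Writing the path as $v_{j_0}\to v_{j_1}\to\cdots\to v_{j_m}$ with $i=j_0<j_1<\cdots<j_m=k$, the base case $m=1$ is the trivial containment $\mathbf{x}_{i,k}\subseteq\mathbf{x}_{i,k}$ (and $m=2$ is precisely the triangle hypothesis). For the inductive step with $m\ge 2$, I would apply the induction hypothesis to the shorter sub-path $v_{j_0}\to\cdots\to v_{j_{m-1}}$ to obtain
\[
\mathbf{x}_{j_0,j_{m-1}}\subseteq\mathbf{x}_{j_0,j_1}\circ\cdots\circ\mathbf{x}_{j_{m-2},j_{m-1}},
\]
then invoke the triangle hypothesis on the ordered triple $(j_0,j_{m-1},j_m)$ to get $\mathbf{x}_{j_0,j_m}\subseteq\mathbf{x}_{j_0,j_{m-1}}\circ\mathbf{x}_{j_{m-1},j_m}$. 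Applying monotonicity to substitute the first bound into the second, and then Lemma~\ref{thm:associative} to drop the parentheses, yields $\mathbf{x}_{j_0,j_m}\subseteq\mathbf{x}_{j_0,j_1}\circ\cdots\circ\mathbf{x}_{j_{m-1},j_m}$, which is the claim for $\mathbf{P}$.

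I expect the only delicate point to be bookkeeping rather than mathematics: one must read the relation symbol $\in$ in the hypothesis and conclusion uniformly as set-containment, treating a single relation $\mathbf{r}_{p,q}\in\mathbb{R}$ as the singleton it represents whenever $\mathbf{x}_{p,q}=\mathbf{r}_{p,q}$, so that the monotonicity step and the associativity of Lemma~\ref{thm:associative} apply to the $\mathbf{x}$'s without a case distinction on links versus constraints. Once that convention is fixed, the induction is routine; the essential structural ingredients are simply that associativity lets the iterated composition be regrouped freely, and monotonicity lets the per-triangle guarantee propagate along the entire path.
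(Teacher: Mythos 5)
Your proof is correct and is essentially the same argument as the paper's: the paper peels the triangle hypothesis off the front of the path ($\mathbf{x}_{i,k}\in\mathbf{x}_{i,i'}\circ\mathbf{x}_{i',k}$, then expand $\mathbf{x}_{i',k}$, "iteratively"), while you run the identical induction peeling off the last node instead, invoking Lemma~\ref{thm:associative} at the same point. Your explicit statements of the monotonicity of $\circ$ under inclusion and of the singleton convention for reading $\in$ merely spell out steps the paper's substitution uses implicitly, so this is a cleaner write-up of the same route rather than a different one.
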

\begin{proof}
For any path $\mathbf{P}=v_{i}\rightarrow v_{i'}\rightarrow v_{i''}\rightarrow\ldots\rightarrow v_{k'}\rightarrow v_{k}$ in $G_{d}$, we have
$\mathbf{x}_{i,k}\in\mathbf{x}_{i,i'}\circ\mathbf{x}_{i',k}$
and
$\mathbf{x}_{i',k}\in\mathbf{x}_{i',i''}\circ\mathbf{x}_{i'',k}$,
then
$\mathbf{x}_{i,k}\in\mathbf{x}_{i,i'}\circ(\mathbf{x}_{i',i''}\circ\mathbf{x}_{i'',k})\in\mathbf{x}_{i,i'}\circ\mathbf{x}_{i',i''}\circ\mathbf{x}_{i'',k}$.
Iteratively, we finally have
$\mathbf{x}_{i,k}\in\mathbf{x}_{i,i'}\circ\mathbf{x}_{i',i''}\circ\ldots\circ\mathbf{x}_{k',k}$.
\end{proof}

\begin{proof}
~\\
\textbf{(Consistency)}

Lemma~\ref{thm:path-consistency} indicates that if any $\triangle ijk$ in the network satisfies the transitivity properties, then any path in the network also satisfies the transitivity properties. Hence, the entire network generated through our construction is consistent.
~\\
\textbf{(Completeness)}
~\\
(by contradiction)

Suppose there exists one interval relations $\mathbf{x}_{i,n}$ that cannot be generated through our construction, i.e. $\mathbf{x}_{i,n}\notin\bigcap_{j=i+1}^{n-1}{(\mathbf{x}_{i,j}\circ\mathbf{x}_{j,n})}$, which means there exists at least one $j^*$ ($1\leq j^*\leq n-1$) that $\mathbf{x}_{i,n}\notin\mathbf{}_{i,j^*}\circ\mathbf{x}_{j^*,n}$. This contradicts the fact that the network is consistent. Besides, any possible relation in $\mathbb{R}$ can be generated between two neighboring nodes. That is to say, any possible triangle that is consistent can be generated by our model.
\end{proof}

\section{Proof of $\text{BIC-Score}(\mathbf{G}:\mathbb{D})=\text{BIC-Score}(\mathbf{G}':\mathbb{D})+\lambda$}
\label{appendix:bic}
\begin{proof}

The corresponding variable dependence in the IBGN model $\mathbf{G}$ is shown as follows:\\
\begin{displaymath}
\scalebox{0.7}{
\xymatrix@C=0.3cm{
  \mathbf{t}_1 \ar@{-->}@[blue][d] \ar@{-->}@[blue][r] \ar@/^0.4pc/@{-->}@[blue][rr] \ar@/^0.8pc/@{-->}@[blue][rrr] \ar@/^1.2pc/@{-->}@[blue][rrrrr] \ar@/^1.6pc/@{-->}@[blue][rrrrrr]
  & \mathbf{t}_2 \ar@{-->}@[blue][d] \ar@{-->}@[blue][r] \ar@/^0.4pc/@{-->}@[blue][rr] \ar@/^0.8pc/@{-->}@[blue][rrrr] \ar@/^1.2pc/@{-->}@[blue][rrrrr]
  & \ldots \ar@{-->}@[blue][r]
  & \mathbf{t}_{k} \ar@{-->}@[blue][d] \ar@{-->}@[blue][r] \ar@/^0.4pc/@{-->}@[blue][rr] \ar@/^0.8pc/@{-->}@[blue][rrr]
  & \ldots \ar@{-->}@[blue][r]
  & \mathbf{t}_{k^*-1} \ar@{-->}@[blue][d] \ar@{-->}@[blue][r]
  & \mathbf{t}_{k^*} \ar@{-->}@[blue][d]\\
  \mathbf{a}_1 \ar[d] \ar[rrd] \ar[rrrrd] \ar[rrrrrd]
  & \mathbf{a}_2 \ar[ld]
  & \ldots\ldots
  & \mathbf{a}_{k} \ar[ld] \ar[rrrrd] \ar[rrrrrd]
  & \ldots\ldots
  & \mathbf{a}_{k^*-1} \ar[ld] \ar[rrd] \ar[rrrrrd]
  & \mathbf{a}_{k^*} \ar[ld] \ar[rrd] \ar[rrrrd]\\
  \mathbf{r}_{1,2}
  & \ldots
  & \mathbf{r}_{1,k}
  & \ldots
  & \mathbf{r}_{1,k^*-1}
  & \mathbf{r}_{1,k^*}
  & \ldots
  & \mathbf{r}_{k,k^*-1}
  & \mathbf{r}_{k,k^*}
  & \ldots
  & \mathbf{r}_{k^*-1,k^*}
  \\
  \mathbf{c}_{1,2} \ar@{-->}@[blue][u]
  & \ldots
  & \mathbf{c}_{1,k} \ar@{-->}@[blue][u]
  & \ldots
  & \mathbf{c}_{1,k^*-1} \ar@{-->}@[blue][u]
  & \mathbf{c}_{1,k^*} \ar@{-->}@[blue][u]
  & \ldots
  & \mathbf{c}_{k,k^*-1} \ar@{-->}@[blue][u]
  & \mathbf{c}_{k,k^*} \ar@{-->}@[blue][u]
  & \ldots
  & \mathbf{c}_{k^*-1,k^*} \ar@{-->}@[blue][u]
}}
\end{displaymath}
According to our definition of the generative process, the structure between $\mathbf{t}$ and $\mathbf{a}$ (marked as blue line), the structure among $\mathbf{t}$ and the structure between $\mathbf{c}$ and $\mathbf{r}$ (marked as blue dotted line) are fixed. We only need to learn the structure between $\mathbf{a}$ and $\mathbf{r}$.

In effect, the structure without $\mathbf{t}$ and $\mathbf{c}$ and their associated links (blue dotted lines) is equivalent to the Bayesian structure $\mathbf{G}'$ defined in Definition~\ref{def:ibgn-structure}, where $U_{\mathbf{G}'}=\{\mathbf{a}_{1},\mathbf{a}_{2},\ldots,\mathbf{a}_{k^*}\}$ and $W_{\mathbf{G}'}=\{\mathbf{r}_{1,2},\mathbf{r}_{1,3},\ldots,\mathbf{r}_{1,k^*},$ $\ldots, \mathbf{r}_{k^*-1,k^*}\}$ (constraint (1)). The number of categories of any atomic action variable $\mathbf{a}_{i}$ ($1\leq i\leq k^*$) is $M+1$, including the \emph{null} atomic action. Similarly, the number of categories of any relation variable $\mathbf{r}_{i,j}$ ($1\leq i<j\leq k^*$) is 8, including the \emph{null} relation (constraint (2)). Given a training dataset $\mathbb{D}$, for any instance $d\in\mathbb{D}$ and its corresponding network $G_d$, if $j>\lvert V_d\rvert$, then $\mathbf{a}_{j}=\text{\emph{null}}$ and $\mathbf{r}_{i,j}=\text{\emph{null}}$. Moreover, any atomic action variable $\mathbf{a}_{i}$ has no parent, and any relation variable $\mathbf{r}_{i,j}$ has either two parents or zero parents (constraint (3)). That is, a link $e_{i,j}$ exists in $\mathbf{G}$ if and only if both $\mathbf{a}_{i}$ and $\mathbf{a}_{j}$ are connected with $\mathbf{r}_{i,j}$ in $\mathbf{G}'$.

Formally, let
$V_{\mathbf{G}}=V_{\mathbf{t}}\bigcup V_{\mathbf{a}}\bigcup V_{\mathbf{r}}\bigcup V_{\mathbf{c}}$ where
$V_{\mathbf{t}}=\{\mathbf{t}_1,\ldots,\mathbf{t}_{k^*}\}$,
$V_{\mathbf{a}}=\{\mathbf{a}_1,\ldots,\mathbf{a}_{k^*}\}$,
$V_{\mathbf{r}}=\{\mathbf{r}_{1,2},\mathbf{r}_{1,3},\ldots,$ $\mathbf{r}_{1,k^*},\ldots,\mathbf{r}_{k^*-1,k^*}\}$
and $V_{\mathbf{c}}=\{\mathbf{r}_{1,2},\mathbf{c}_{1,3},\ldots,\mathbf{c}_{1,k^*},$ $\ldots,\mathbf{c}_{k^*-1,k^*}\}$. $\Pi_i$ denotes the parents of $v_i\in V_\mathbf{G}$ and $\widehat{\Pi_i}=\prod_{v_j\in \Pi_i}\widehat{v}_j$, where $\widehat{v}_j$ is the number of categories of $v_j$. In particular, we have
\begin{equation}
\nonumber
\footnotesize
\begin{split}
\widehat{v}_j=&\left\{
  \begin{array}{l l}
    \ell & \quad \text{if $v_j\in V_{\mathbf{t}}$},\\
    M+1 & \quad \text{if $v_j\in V_{\mathbf{a}}$},\\
    8 & \quad \text{if $v_j\in V_{\mathbf{r}}$},\\
    \mathrm{c}_j & \quad \text{if $v_j\in V_{\mathbf{c}}$}.\\
  \end{array}
  \right.
\end{split}
\end{equation}
where $\mathrm{c}_j$ is a constant. Suppose the corresponding variable $\mathbf{c}_{n',n}$ of the node $v_j\in V_{\mathbf{c}}$ equals $\lvert C_z\rvert$, we have
\begin{equation}
\nonumber
\footnotesize
\begin{split}
\mathrm{c}_j=&\left\{
  \begin{array}{l l}
    1 & \quad \text{if $1\leq z\leq 7$},\\
    3 & \quad \text{if $z=8$ or $9$},\\
    5 & \quad \text{if $z=10$},\\
    7 & \quad \text{if $z=11$}.\\
  \end{array}
  \right.
\end{split}
\end{equation}

Since the structure between $\mathbf{t}$ and $\mathbf{a}$, the structure among $\mathbf{t}$ and the structure between $\mathbf{c}$ and $\mathbf{r}$ are fixed, the number of parents of a node $v_i\in V_{\mathbf{t}}\bigcup V_{\mathbf{a}}\bigcup V_{\mathbf{c}}$ are fixed. Then, we have
\begin{equation}
\nonumber
\footnotesize
\begin{split}
\widehat{\Pi_i}=&\left\{
  \begin{array}{l l}
    \ell^{i-1} & \quad \text{if $v_i\in V_{\mathbf{t}}$},\\
    \ell & \quad \text{if $v_i\in V_{\mathbf{a}}$},\\
    1 & \quad \text{if $v_i\in V_{\mathbf{c}}$}.
  \end{array}
  \right.
\end{split}
\end{equation}
Also, let $\Phi$ be the entire vector of parameters such that $\forall_{ijk}:\phi_{ijk}=P(\mathrm{v}_{ik}\mid \pi_{ij})$, where $\mathrm{v}_{ik}$ and $\pi_{ij}$ denotes that $v_i$ is assigned with the $k$-th element and its parent is assigned with the $j$-th element in $\widehat{\Pi}_i$, respectively; $n_{ijk}$ indicates how many instances of $\mathbb{D}$ contain both $\mathrm{v}_{ik}$ and $\pi_{ij}$. Here, we split the parameter vector $\Phi$ into three parts: $\Phi^{(1)}$, $\Phi^{(2)}$, $\Phi^{(0)}$ and $\Phi^{(3)}$ are the parameter vectors associated with $V_{\mathbf{t}}$, $V_{\mathbf{a}}$, $V_{\mathbf{r}}$ and $V_{\mathbf{c}}$, respectively. Given a training dataset $\mathbb{D}_l$, we have
\begin{equation}
\nonumber
\scriptsize
\begin{split}
&\text{BIC-Score}(\mathbf{G}:\mathbb{D}_l)=\mathop{\max}\limits_{\Phi}{L_{\mathbf{G},\mathbb{D}_l}(\Phi)}-\frac{\log\lvert\mathbb{D}_l\rvert}{2}\cdot\lvert\Phi\rvert\\
&=\mathop{\max}\limits_{\Phi}{\log\mathop{\prod}\limits_{i=1}^{\lvert V_{\mathbf{G}}\rvert}\mathop{\prod}\limits_{j=1}^{\widehat{\Pi_i}}\mathop{\prod}\limits_{k=1}^{\widehat{v_i}}}{\phi_{ijk}^{n_{ijk}}}
-\frac{\log\lvert\mathbb{D}_l\rvert}{2}\cdot\mathop{\sum}\limits_{i=1}^{\lvert V_{\mathbf{G}}\rvert}{\widehat{\Pi_{i}}\cdot(\widehat{v_i}-1)}\\
&=\mathop{\max}\limits_{\Phi}({\log\mathop{\prod}\limits_{v_i\in V_{\mathbf{t}}}\mathop{\prod}\limits_{j=1}^{\widehat{\Pi_i}}\mathop{\prod}\limits_{k=1}^{\widehat{v_i}}}{\phi_{ijk}^{n_{ijk}}}
+{\log\mathop{\prod}\limits_{v_i\in V_{\mathbf{a}}}\mathop{\prod}\limits_{j=1}^{\widehat{\Pi_i}}\mathop{\prod}\limits_{k=1}^{\widehat{v_i}}}{\phi_{ijk}^{n_{ijk}}}
+{\log\mathop{\prod}\limits_{v_i\in V_{\mathbf{r}}}\mathop{\prod}\limits_{j=1}^{\widehat{\Pi_i}}\mathop{\prod}\limits_{k=1}^{\widehat{v_i}}}{\phi_{ijk}^{n_{ijk}}})
+{\log\mathop{\prod}\limits_{v_i\in V_{\mathbf{c}}}\mathop{\prod}\limits_{j=1}^{\widehat{\Pi_i}}\mathop{\prod}\limits_{k=1}^{\widehat{v_i}}}{\phi_{ijk}^{n_{ijk}}})\\
&-\frac{\log\lvert\mathbb{D}_l\rvert}{2}\cdot(\mathop{\sum}\limits_{v_i\in V_{\mathbf{t}}}{\widehat{\Pi_{i}}\cdot(\widehat{v_i}-1)}
+\mathop{\sum}\limits_{v_i\in V_{\mathbf{a}}}{\widehat{\Pi_{i}}\cdot(\widehat{v_i}-1)}
+\mathop{\sum}\limits_{v_i\in V_{\mathbf{r}}}{\widehat{\Pi_{i}}\cdot(\widehat{v_i}-1)}
+\mathop{\sum}\limits_{v_i\in V_{\mathbf{c}}}{\widehat{\Pi_{i}}\cdot(\widehat{v_i}-1)})\\
&=\mathop{\max}\limits_{\Phi^{(1)}}{\log\mathop{\prod}\limits_{i=1}^{k^*}\mathop{\prod}\limits_{j=1}^{\ell^{i-1}}\mathop{\prod}\limits_{k=1}^{\ell}}{({\phi}_{ijk}^{(1)})^{n_{ijk}}}
-\frac{\log\lvert\mathbb{D}_l\rvert}{2}\cdot{k^*\cdot\ell^{i-1}\cdot(\ell-1)}\\
&+\mathop{\max}\limits_{\Phi^{(2)}}{\log\mathop{\prod}\limits_{i=1}^{k^*}\mathop{\prod}\limits_{j=1}^{\ell}\mathop{\prod}\limits_{k=1}^{M+1}}{(\phi_{ijk}^{(2)})^{n_{ijk}}}
-\frac{\log\lvert\mathbb{D}_l\rvert}{2}\cdot{k^*\cdot\ell\cdot M}\\
&+\mathop{\max}\limits_{\Phi^{(3)}}{\log\mathop{\prod}\limits_{i=1}^{k^*(k^*-1)/2}\mathop{\prod}\limits_{j=1}^{1}\mathop{\prod}\limits_{k=1}^{\mathrm{c}_i}}{(\phi_{ijk}^{(3)})^{n_{ijk}}}
-\frac{\log\lvert\mathbb{D}_l\rvert}{2}\cdot{\mathop{\sum}\limits_{i=1}^{k^*(k^*-1)/2}(\mathrm{c}_i-1)}\\
&+\mathop{\max}\limits_{\Phi^{(0)}}{\log\mathop{\prod}\limits_{v_i\in V_{\mathbf{r}}}\mathop{\prod}\limits_{j=1}^{1}\mathop{\prod}\limits_{k=1}^{8}}{(\phi_{ijk}^{(0)})^{n_{ijk}}}
-\frac{\log\lvert\mathbb{D}_l\rvert}{2}\cdot\frac{k^*(k^*-1)}{2}\cdot 7  \text{\quad\quad$\rhd$ for $v_i\in V_{\mathbf{r}}$ and $\Pi_i\subset V_{\mathbf{c}}$}\\
&+\mathop{\max}\limits_{\Phi^{(0)}}{\log\mathop{\prod}\limits_{v_i\in V_{\mathbf{r}}}\mathop{\prod}\limits_{j=1}^{\widehat{\Pi_i}}\mathop{\prod}\limits_{k=1}^{8}}{(\phi_{ijk}^{(0)})^{n_{ijk}}}
-\frac{\log\lvert\mathbb{D}_l\rvert}{2}\cdot\mathop{\sum}\limits_{v_i\in V_{\mathbf{r}}}{\widehat{\Pi_{i}}\cdot7} \text{\quad\quad\quad$\rhd$ for $v_i\in V_{\mathbf{r}}$ and $\Pi_i\subset V_{\mathbf{a}}$}.
\end{split}
\end{equation}

Similarly, in ${\mathbf{G}'}$, any node $v_i\in U_{\mathbf{G}'}$ has no parent, and thus $\widehat{\Pi_i}=1$.  We split the parameter vector $\Phi$ into two parts: $\Phi^{(4)}$ and $\Phi^{(0)}$ are the parameter vectors associated with $U_{\mathbf{G}'}$ and $W_{\mathbf{G}'}$, respectively. Then, we have
\begin{equation}
\nonumber
\scriptsize
\begin{split}
\text{BIC-Score}(\mathbf{G}':\mathbb{D}_l)&=\mathop{\max}\limits_{\Phi}{L_{\mathbf{G}',\mathbb{D}_l}(\Phi)}-\frac{\log\lvert\mathbb{D}_l\rvert}{2}\cdot\lvert\Phi\rvert\\
&=\mathop{\max}\limits_{\Phi}{\log\mathop{\prod}\limits_{i=1}^{\lvert V_{\mathbf{G}'}\rvert}\mathop{\prod}\limits_{j=1}^{\widehat{\Pi_i}}\mathop{\prod}\limits_{k=1}^{\widehat{v_i}}}{\phi_{ijk}^{n_{ijk}}}
-\frac{\log\lvert\mathbb{D}_l\rvert}{2}\cdot\mathop{\sum}\limits_{i=1}^{\lvert V_{\mathbf{G}'}\rvert}{\widehat{\Pi_{i}}\cdot(\widehat{v_i}-1)}\\
&=\mathop{\max}\limits_{\Phi^{(4)}}{\log\mathop{\prod}\limits_{i=1}^{k^*}\mathop{\prod}\limits_{j=1}^{1}\mathop{\prod}\limits_{k=1}^{M+1}}{(\phi_{ijk}^{(4)})^{n_{ijk}}}
-\frac{\log\lvert\mathbb{D}_l\rvert}{2}\cdot k^*\cdot M\\
&+\mathop{\max}\limits_{\Phi^{(0)}}{\log\mathop{\prod}\limits_{v_i\in W_{\mathbf{G}'}}\mathop{\prod}\limits_{j=1}^{\widehat{\Pi_i}}\mathop{\prod}\limits_{k=1}^{8}}{(\phi_{ijk}^{(0)})^{n_{ijk}}}
-\frac{\log\lvert\mathbb{D}_l\rvert}{2}\cdot\mathop{\sum}\limits_{v_i\in W_{\mathbf{G}'}}{\widehat{\Pi_{i}}\cdot7}.
\end{split}
\end{equation}

Let
\begin{equation}
\nonumber
\scriptsize
\begin{split}
\lambda=&(\mathop{\max}\limits_{\Phi^{(1)}}{\log\mathop{\prod}\limits_{i=1}^{k^*}\mathop{\prod}\limits_{j=1}^{\ell^{i-1}}\mathop{\prod}\limits_{k=1}^{\ell}}{(\phi_{ijk}^{(1)})^{n_{ijk}}}
-\frac{\log\lvert\mathbb{D}_l\rvert}{2}\cdot{k^*\cdot\ell^{i-1}\cdot(\ell-1)})\\
&+(\mathop{\max}\limits_{\Phi^{(2)}}{\log\mathop{\prod}\limits_{i=1}^{k^*}\mathop{\prod}\limits_{j=1}^{\ell}\mathop{\prod}\limits_{k=1}^{M+1}}{(\phi_{ijk}^{(2)})^{n_{ijk}}}
-\frac{\log\lvert\mathbb{D}_l\rvert}{2}\cdot{k^*\cdot\ell\cdot M})\\
&+(\mathop{\max}\limits_{\Phi^{(3)}}{\log\mathop{\prod}\limits_{i=1}^{k^*(k^*-1)/2}\mathop{\prod}\limits_{j=1}^{1}\mathop{\prod}\limits_{k=1}^{\mathrm{c}_i}}{(\phi_{ijk}^{(3)})^{n_{ijk}}}
-\frac{\log\lvert\mathbb{D}_l\rvert}{2}\cdot{\mathop{\sum}\limits_{i=1}^{k^*(k^*-1)/2}(\mathrm{c}_i-1)})\\
&+(\mathop{\max}\limits_{\Phi^{(0)}}{\log\mathop{\prod}\limits_{v_i\in V_{\mathbf{r}}}\mathop{\prod}\limits_{j=1}^{1}\mathop{\prod}\limits_{k=1}^{8}}{(\phi_{ijk}^{(0)})^{n_{ijk}}}
-\frac{\log\lvert\mathbb{D}_l\rvert}{2}\cdot\frac{k^*(k^*-1)}{2}\cdot 7)\text{\quad$\rhd$ for $v_i\in V_{\mathbf{r}}$ and $\Pi_i\subset V_{\mathbf{c}}$}\\
&-(\mathop{\max}\limits_{\Phi^{(4)}}{\log\mathop{\prod}\limits_{i=1}^{k^*}\mathop{\prod}\limits_{j=1}^{1}\mathop{\prod}\limits_{k=1}^{M+1}}{(\phi_{ijk}^{(4)})^{n_{ijk}}}
-\frac{\log\lvert\mathbb{D}_l\rvert}{2}\cdot k^*\cdot M).
\end{split}
\end{equation}

Given a dataset $\mathbb{D}_l$, the parameter vectors $\Phi^{(1)}$, $\Phi^{(2)}$, $\Phi^{(3)}$, $\Phi^{(4)}$ can be estimated to maximize their respective logarithmic equations regardless of the network structures. Similarly, for the parameters $\phi_{ijk}^{(0)}\in\Phi^{(0)}$ such that $v_i\in V_{\mathbf{r}}$ and $\Pi_i\subset V_{\mathbf{c}}$, they can also be estimated invariably under arbitrary network structures. Therefore, $\lambda$ is constant. Then we have
\begin{equation}
\nonumber
\small
\begin{split}
&\text{BIC-Score}(\mathbf{G}:\mathbb{D}_l)=\text{BIC-Score}(\mathbf{G}':\mathbb{D}_l)+\lambda.
\end{split}
\end{equation}
\end{proof}

\section{Conditional distribution for Gibbs sampling}
\label{appendix:sampling}
Since the probability of $\mathbf{a}_{n}$ conditioned on $\mathbf{t}_{n}$ is Dirichlet-multinomial distribution, we can write out the integration formulae:
\begin{equation}
\nonumber
\footnotesize
\begin{split}
&\prod_{d}\prod_{n=1}^{\mid d\mid}P(\mathbf{a}_{n}\mid \mathbf{t}_{n};\beta)=\prod_{k=1}^{\ell}{P(\mathbf{a}\mid T_k;\beta)}=\prod_{k=1}^{\ell}{\int_{\theta_{k}}{P(\mathbf{a}\mid\theta_{k})P(\theta_{k};\beta)}d{\theta_{k}}}\\
&=\prod_{k=1}^{\ell}\frac{\Gamma(\sum_{i'=1}^{M}{\beta_{k,i'}})}{\Gamma(\sum_{i'=1}^{M}{(na_{k,i'}+\beta_{k,i'})})}\prod_{i'=1}^{M}\frac{\Gamma(na_{k,i'}+\beta_{k,i'})}{\Gamma(\beta_{k,i'})}
\end{split}
\end{equation}
The total number of tables cannot exceed the maximum network size $\ell$.

The probability of $\mathbf{t}_{n}$ conditioned on its previous tables $\mathbf{t}_{1},\mathbf{t}_{2},\ldots,\mathbf{t}_{n-1}$ follows CRP $(n\geq 2)$. We adopt the technique introduced by Sudderth~\cite{sudderth2006graphical} to sample the $n$-th table in $G_d$ as follows
\begin{equation}
\nonumber
\footnotesize
\begin{split}
&P(\mathbf{t}_{n}=T_{\zeta}\mid\mathbf{t}_{1},\mathbf{t}_{2},\ldots,\mathbf{t}_{n-1};\alpha_{\zeta})=\frac{1}{n+\alpha_{\zeta}-1}(\sum_{k=1}^{NT_{n}}{nt_{k}\delta(\zeta,k)+\alpha_{\zeta}\delta(\zeta,\bar{k})})
\end{split}
\end{equation}
where $nt_{k}$ is the number of time the previous $n-1$ nodes in $G_{d}$ are assigned to the table $T_k$, and $\delta$ is the Kronecker delta function that \[\footnotesize \delta(\zeta,k) = \left\{
  \begin{array}{l l}
    1 & \quad \text{if $\zeta=k$}\\
    0 & \quad \text{if $\zeta\neq k$}
  \end{array} \right.\]
and $\bar{k}$ denotes a previously empty table.

Now we derive the conditional probability of each variable $\mathbf{t}_{\mathbf{n}}$ of the $\mathbf{n}$-th table in $G_{\mathbf{d}}$ assigned to the table $T_{\zeta}$ by Gibbs sampling as follows:
\begin{scriptsize}
\begin{align}
\nonumber
&P(\mathbf{t}_{\mathbf{n}}=T_{\zeta}\mid\mathbf{t}_{\mathbf{-n}}, \mathbf{a}, \mathbf{r};\alpha, \beta)\notag
\propto P(\mathbf{t}_{\mathbf{n}}, \mathbf{t}_{\mathbf{-n}}, \mathbf{a}, \mathbf{r};\alpha, \beta)\notag\\
&=\prod_{d}\prod_{n=1}^{\mid d\mid}P(\mathbf{a}_{n}\mid \mathbf{t}_{n};\beta)\notag
\times\prod_{d}\prod_{n=2}^{\mid d\mid}P(\mathbf{t}_{n}\mid\mathbf{t}_{1},\ldots,\mathbf{t}_{n-1};\alpha)\notag\times\prod_{d}\prod_{n=2}^{\mid d\mid}P(\mathbf{r}_{n-1,n}\mid \mathbf{a}_{n-1},\mathbf{a}_{n})\notag\\
&\propto\prod_{d}\prod_{n=1}^{\mid d\mid}P(\mathbf{a}_{n}\mid \mathbf{t}_{n};\beta)\times P(\mathbf{t}_{\mathbf{n}}\mid\mathbf{t}_{1},\ldots,\mathbf{t}_{\mathbf{n}-1};\alpha)\notag
\times \prod_{n\neq \mathbf{n}} P(\mathbf{t}_{n}\mid\mathbf{t}_{1},\ldots,\mathbf{t}_{n-1};\alpha)\notag\\
&\times \prod_{d\neq\mathbf{d}}\prod_{n=2}^{\mid d\mid} P(\mathbf{t}_{n}\mid\mathbf{t}_{1},\ldots,\mathbf{t}_{n-1};\alpha)\notag\\
&\propto\prod_{k=1}^{\ell}\frac{\Gamma(\sum_{i'=1}^{M}\beta_{k,i'})}{\Gamma(\sum_{i'=1}^{M}(na_{k,i'}+\beta_{k,i'} ))}\prod_{i=1}^{M}\frac{\Gamma(na_{k,i}+\beta_{k,i})}{\Gamma(\beta_{k,i})}\notag\times\frac{1}{\mathbf{n}+\alpha_{\zeta}-1}(\sum_{k=1}^{NT_{\mathbf{n}}}{nt_{k}\delta(\zeta,k)+\alpha_{\zeta}\delta(\zeta,\bar{k})})\notag\\
&\propto\frac{\prod_{i=1}^{M}{\Gamma(na_{\zeta,i}+\beta_{\zeta,i})}}{\Gamma(\sum_{i'=1}^{M}{(na_{\zeta,i'}}+\beta_{\zeta,i'}) )}\prod_{k\neq\zeta}\frac{\prod_{i=1}^{M}{\Gamma(na_{k,i}+\beta_{k,i})}}{\Gamma(\sum_{i'=1}^{M}{(na_{k,i'}}+\beta_{k,i'})
)}\notag\\
&\times\frac{1}{\mathbf{n}+\alpha_{\zeta}-1}(\sum_{k=1}^{NT_{\mathbf{n}}}{nt_{k}\delta(\zeta,k)+\alpha_{\zeta}\delta(\zeta,\bar{k})})\notag\\
&\propto\frac{\Gamma(na_{\zeta,\tilde{\mathbf{i}},-\mathbf{n}}+\beta_{\zeta,\tilde{\mathbf{i}}}+1)\prod_{i\neq\tilde{\mathbf{i}}}{\Gamma(na_{\zeta,i,-\mathbf{n}}+\beta_{\zeta,i})}}{\Gamma(\sum_{i'=1}^{M}({na_{\zeta,i',-\mathbf{n}}}+\beta_{\zeta,i'}) +1)}\notag
\prod_{k\neq\zeta}\frac{\prod_{i=1}^{M}{\Gamma(na_{k,i,-\mathbf{n}}+\beta_{k,i})}}{\Gamma(\sum_{i'=1}^{M}({na_{k,i',-\mathbf{n}}}+\beta_{k,i'}))}\notag\\
&\times\frac{1}{\mathbf{n}+\alpha_{\zeta}-1}(\sum_{k=1}^{NT_{\mathbf{n}}}{nt_{k}\delta(\zeta,k)+\alpha_{\zeta}\delta(\zeta,\bar{k})})\notag\\
&=\frac{na_{\zeta,\tilde{\mathbf{i}},-\mathbf{n}}+\beta_{\zeta,\tilde{\mathbf{i}}}}{\sum_{i'=1}^{M}{(na_{\zeta,i',-\mathbf{n}}+\beta_{\zeta,i'}) }}\prod_{k=1}^{\ell}\frac{\prod_{i=1}^{M}{\Gamma(na_{k,i,-\mathbf{n}}+\beta_{k,i})}}{\Gamma(\sum_{i'=1}^{M}({na_{k,i',-\mathbf{n}}}+\beta_{k,i'}))}\notag\\
&\times\frac{1}{\mathbf{n}+\alpha_{\zeta}-1}(\sum_{k=1}^{NT_{\mathbf{n}}}{nt_{k}\delta(\zeta,k)+\alpha_{\zeta}\delta(\zeta,\bar{k})})\notag\\
&\propto\frac{na_{\zeta,\tilde{\mathbf{i}},-\mathbf{n}}+\beta_{\zeta,\tilde{\mathbf{i}}}}{\sum_{i'=1}^{M}{(na_{\zeta,i',-\mathbf{n}}+\beta_{\zeta,i'})
}}\notag\times\frac{1}{\mathbf{n}+\alpha_{\zeta}-1}(\sum_{k=1}^{NT_{\mathbf{n}}}{nt_{k}\delta(\zeta,k)+\alpha_{\zeta}\delta(\zeta,\bar{k})})\notag\\
&= \left\{
  \begin{array}{l l}
  \nonumber
    \frac{na_{\zeta,\tilde{\mathbf{i}},-\mathbf{n}}+\beta_{\zeta,\tilde{\mathbf{i}}}}{\sum_{i'=1}^{M}{(na_{\zeta,i',-\mathbf{n}}+\beta_{\zeta,i'} )}}\times\frac{nt_{\zeta}}{\mathbf{n}+\alpha_{\zeta}-1} & \quad \text{if $\zeta\leq NT_{\mathbf{n}}$}\\
    \frac{na_{\zeta,\tilde{\mathbf{i}},-\mathbf{n}}+\beta_{\zeta,\tilde{\mathbf{i}}}}{\sum_{i'=1}^{M}{(na_{\zeta,i',-\mathbf{n}}+\beta_{\zeta,i'} )}}\times\frac{\alpha_{\zeta}}{\mathbf{n}+\alpha_{\zeta}-1} & \quad \text{if $\zeta=NT_{\mathbf{n}}+1$}
  \end{array} \right.
\end{align}
\end{scriptsize}
\end{appendices}

\bibliographystyle{spmpsci}
\bibliography{references}

\end{document}